\documentclass{article} 
\usepackage{iclr2026_conference,times}
\usepackage{subfigure}
\usepackage{graphicx}
\usepackage{todonotes}
\usepackage{amsthm}
\usepackage{amsmath}

\newtheorem{lemma}{Lemma}[section]

\usepackage{amsmath,amsfonts,bm}









\def\eqref#1{equation~\ref{#1}}









\def\1{\bm{1}}










\DeclareMathAlphabet{\mathsfit}{\encodingdefault}{\sfdefault}{m}{sl}
\SetMathAlphabet{\mathsfit}{bold}{\encodingdefault}{\sfdefault}{bx}{n}











\newcommand{\R}{\mathbb{R}}



\DeclareMathOperator*{\argmax}{arg\,max}
\DeclareMathOperator*{\argmin}{arg\,min}

\newcommand{\bL}{{\mathbf L}}
\newcommand{\bz}{{\mathbf z}}
\newcommand{\bI}{{\mathbf I}}
\newcommand{\bZ}{{\mathbf Z}}
\newcommand{\bx}{{\mathbf x}}
\newcommand{\bM}{{\mathbf M}}
\newcommand{\bJ}{{\mathbf J}}

\newcommand{\bA}{{\mathbf A}}

\newcommand{\bD}{{\mathbf D}}

\usepackage{hyperref}
\usepackage{url}
\usepackage{multirow}
\usepackage{booktabs}
\usepackage{subcaption}

\title{Beyond the Laplacian: Interpolated Spectral Augmentation for Graph Neural Networks}


\author{Ziyao Cui \\
Department of Computer Science\\
Duke University\\
Durham, NC 27705 USA \\
\texttt{richard.cui@duke.edu} \\
\And
Edric Tam\\
Department of Biomedical Data Science\\
Stanford University \\
Stanford, CA 94304 USA\\
\texttt{edrictam@stanford.edu}
}

%

\iclrfinalcopy 
\begin{document}

\maketitle

\begin{abstract}
Graph neural networks (GNNs) are fundamental tools in graph machine learning. The performance of GNNs relies crucially on the availability of informative node features, which can be limited or absent in real-life datasets and applications. A natural remedy is to augment the node features with embeddings computed from eigenvectors of the graph Laplacian matrix. While it is natural to default to Laplacian spectral embeddings, which capture meaningful graph connectivity information, we ask whether spectral embeddings from alternative graph matrices can also provide useful representations for learning. We introduce Interpolated Laplacian Embeddings (ILEs), which are derived from a simple yet expressive family of graph matrices. Using tools from spectral graph theory, we offer a straightforward interpretation of the structural information that ILEs capture. We demonstrate through simulations and experiments on real-world datasets that feature augmentation via ILEs can improve performance across commonly used GNN architectures. Our work offers a straightforward and practical approach that broadens the practitioner's spectral augmentation toolkit when node features are limited. 

\end{abstract}

\section{Introduction}

Graph neural networks (GNNs) are foundational tools for modeling and learning from relational data. GNNs operate by passing, transforming, and aggregating messages between neighboring nodes iteratively, thereby encoding connectivity and feature information into expressive node representations that can capture complex patterns and relationships. The GNN approach has seen significant success, enabling advances in domains ranging from biology \citep{stokes2020deep} to recommender systems \citep{ying2018graph, wu2022graph1}. The effectiveness of GNNs, however, often hinges on the availability of rich and informative node-level features. In many practical settings, node features can be unavailable (due to reasons such as privacy, missing data, etc.), limited, or corrupted with noise \citep{rossi2022unreasonable}. Na\"ive application of GNN methods in these settings can lead to degraded performance \citep{said2023enhanced}. 

In these settings where node features are limited or unavailable, the most natural approach is to perform feature augmentation. A common heuristic is to augment the node features with one-hot encoding vectors or the all-ones vector. Such features do not take into account the underlying graph's topology. A more informative approach is spectral augmentation, where spectral embeddings derived from the graph Laplacian matrix's eigenvectors are used as node features \citep{said2023enhanced}. Indeed, it is well known from the manifold learning and spectral clustering literature that Laplacian spectral embeddings capture rich structural properties of the underlying graph \citep{ng2001spectral, von2007tutorial, belkin2003laplacian}. This approach of pre-computing spectral embeddings as node features is natural, as it supplies prior structural information that even shallow GNNs can directly exploit, mitigating the need for expensive and deep architectures that often suffer from issues such as oversmoothing \citep{rusch2023survey}. Such embeddings have also been successfully applied in GNNs beyond the missing/limited features setting, such as in graph transformers \citep{kreuzer2021rethinking, dwivedi2020generalization} as a form of positional encoding. \cite{lim2022sign} showed that such Laplacian embeddings, after accounting for sign and basis symmetries, are expressive and can improve the performance of GNNs.

Laplacian spectral embeddings are often treated as the default option in GNN feature augmentation \citep{said2023enhanced, lim2022sign}. 
On the other hand, a variety of graph matrices motivated by different applications have been considered in the spectral graph theory and network science literature \citep{grindrod2018deformed, ou2016asymmetric, haemers2011universal}. These alternative matrices can induce spectral embeddings that capture different kinds of graph connectivity information. For example, \cite{priebe2019two} showed that Laplacian and adjacency spectral embeddings capture drastically different structures of graphs that have substantial implications for downstream tasks such as clustering. 

Motivated by the success of spectral feature augmentation in GNNs, especially in the limited/missing node features setting \citep{said2023enhanced, lim2022sign}, we ask whether considering alternative spectral embeddings can lead to improved performance in practice. To this end, we propose a simple yet expressive family of graph matrices that contains many well-known graph matrices as special cases. We call the resulting spectral embeddings captured by this family the Interpolated Laplacian Embeddings (ILEs). Using tools from spectral graph theory, we provide interpretations for the information captured by ILEs. Through experiments, we show that the choice of graph matrix used to compute spectral embeddings can impact downstream graph learning performance in the missing and limited node features settings. We discuss how to select relevant matrices and tuning parameters in practice. Our results suggest that practitioners may benefit from considering a broader range of spectral embeddings when augmenting node features in GNNs. 

\subsection{Relevant Literature}
We refer to \cite{hamilton2018inductiverepresentationlearninglarge, wu2022graph} for general background on GNNs. Commonly used GNN architectures that we adopt in our experiments include Graph Convolutional Networks (GCN) \citep{kipf2017semisupervisedclassificationgraphconvolutional}, Graph Isomorphism Networks (GIN) \citep{xu2019powerfulgraphneuralnetworks}, and GraphSAGE \citep{hamilton2018inductiverepresentationlearninglarge}. 

The idea of using Laplacian embeddings as positional encodings to augment input features for graph neural networks has been explored in the literature \citep{said2023enhanced,  dwivedi2021graph}. Such Laplacian positional encodings are also used in graph transformers \citep{dwivedi2020generalization, kreuzer2021rethinking}. \cite{lim2022sign} proposes SignNet and BasisNet as architectural components that can be used to process eigenvector embeddings to ensure invariance to symmetries that arise from sign flips and rotations (when repeated eigenvalues are present).  

In addition to the Laplacian, adjacency spectral embeddings have been explored in the spectral clustering literature \citep{sussman2012consistent, cape2019spectral}. General families of graph matrices and operators have been considered in the spectral graph theory and network science literature. Notable examples include the Katz similarity matrix \citep{ou2016asymmetric}, the deformed Laplacian \citep{grindrod2018deformed}, the signless Laplacian \citep{cvetkovic2007signless} and related generalizations \citep{nikiforov2017merging}, the universal Adjacency matrices \citep{haemers2011universal}, and many more. Different spectral embeddings can capture different graph structures with different interpretations, an observation noted in \cite{priebe2019two} in the context of spectral clustering.  

\paragraph{Our Contribution}
In this work, we revisit the common practice of using Laplacian spectral embeddings for feature augmentation in GNNs. Our contributions are threefold: (i) we introduce interpolated Laplacian Embeddings (ILEs), which are eigenvectors derived from a simple and flexible family of graph matrices that generalizes many classical graph matrices as special cases; (ii) we provide theoretical analysis, leveraging tools from spectral graph theory, to characterize and interpret the structural information that ILEs encode; and (iii) we conduct extensive experiments demonstrating that the choice of graph matrix can lead to improvements in downstream GNN performance, especially under settings with limited or missing node features. Our results move beyond the default use of Laplacian embeddings, broadening the spectral augmentation toolkit available for GNNs.
\section{Preliminaries}

\label{prelim}
\subsection{Setup and Notation}

We work in the setting of simple, undirected, connected, and weighted graphs. Let $G = (V, E, w)$ denote such a graph, where $V$ is the vertex set, 
$E$ the edge set, and $w : E \to \R_{\ge 0}$ assigns a nonnegative weight to each edge.  
For convenience, we write $w_e$ or $w_{uv}$ to denote the weight of edge $e$ or edge $(u,v) \in E$. Throughout, we index vertices by $[n]:=\{1,2,\dots,n\}$. We use $i, j, u, v$ to index vertices in $[n]$. We use the terms nodes and vertices interchangeably. We use bold fonts for matrices and vectors. 

We consider several canonical graph matrices associated with $G$. The adjacency matrix $\bA$ of $G$ is defined entrywise by  
$\bA_{uv} =  w_{uv}$ for $ (u,v) \in E$ and $0$ otherwise. The degree matrix $\bD$ is the diagonal matrix with entries  
$\bD_{uu} = \text{deg}(u) =  \sum_{v=1}^n \bA_{uv}$. The graph Laplacian matrix is then given by  
$\bL = \bD - \bA$. Since $G$ is undirected, the matrices $\bA$, $\bD$, and $\bL$ are all symmetric, and thus admit real eigenvalues with orthonormal eigenvectors.  
We will generally denote the eigenvalues of any  symmetric graph matrices $\bM$ (including the graph Laplacian $\bL$) by $\lambda_1, \dots, \lambda_n$ in ascending order  
($\lambda_i \le \lambda_j$ for $i < j$). Following the usual convention of spectral graph theory, for the special case of the adjacency matrix $\bA$, we denote its eigenvalues by $\omega_1, \dots, \omega_n$ in descending order ($\omega_i \ge \omega_j$ for $i < j$). 
 
\subsection{Spectral Embeddings of Nodes and The Two-Truths Phenomenon}
The eigenvectors of graph matrices can be used to construct low-dimensional spectral node embeddings that preserve structural properties of the underlying graph. Consider the eigendecomposition of the Laplacian $\bL = \sum_{u = 1}^n \lambda_u \bz_u\bz_u^T$. A spectral embedding of dimension $k$ is constructed by selecting the first $k$ eigenvectors of $\bL$ that correspond to the smallest $k$ non-zero eigenvalues. The entries of such eigenvectors are then used as the coordinates in a Euclidean representation. More concretely, the embedding of node $u$ is given by the $u$-th row of
$[\bz_{1},\bz_{2}, \dots, \bz_{k}] \in \mathbb{R}^{n \times k}
$. Adjacency spectral embeddings are constructed analogously, except the eigenvectors corresponding to the \emph{largest} $k$ eigenvalues are selected.  

Arguably, the most canonical and commonly used graph matrices are the Laplacian matrix $\bL$ and adjacency matrix $\bA$. Most work in the literature utilizes only one particular type of spectral embedding, usually the Laplacian by default. The effect of matrix choice on downstream tasks, such as clustering, has only been explicitly studied recently. Notably, the pioneering work of \cite{priebe2019two} identified the ``two-truths" phenomenon: spectral embeddings from the normalized Laplacian versus the adjacency matrix yield clustering results that are drastically different yet both valid. Laplacian spectral embeddings capture community structure, which reflects high connectivity within communities and low connectivity between communities. This can be interpreted via the well-known relationship between Laplacian eigenvectors and approximate graph cuts \citep{von2007tutorial}. In contrast, adjacency embeddings capture ``core-periphery" structure \citep{priebe2019two}. Here, high-degree nodes that are highly connected form a hub, whereas low-degree nodes that are sparsely connected (often only to the core) form the periphery.   We pictorially illustrate this phenomenon in Figure \ref{ex_comparison}. While \cite{priebe2019two} used a stochastic block model and asymptotic analysis to explain this phenomenon, in Section \ref{sec:ILE} we provide an alternative explanation that does not resort to asymptotics.
This phenomenon suggests that the choice of graph matrix for spectral embedding can affect downstream tasks, motivating our study of its impact on node classification with GNNs.
\begin{figure}[t]
\centering
\subfigure[Laplacian embedding (spectral dimension $k = 1$) captures community structure.]{
    \label{fig:star_Laplacian}
    \includegraphics[width=0.48\textwidth]{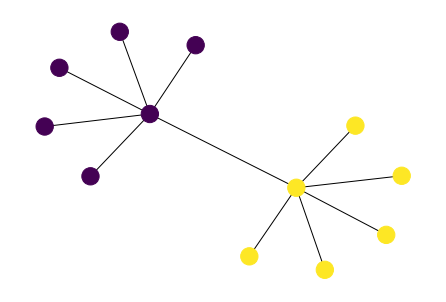}
}
\hfill
\subfigure[Adjacency embedding (spectral dimension $k = 1$) captures core-periphery structure.]{
    \label{fig:star_adjacency}
    \includegraphics[width=0.48\textwidth]{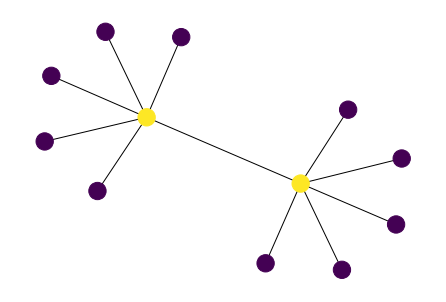}
}
\caption{Illustration of the two truths phenomenon on a small graph. Nodes are colored by thresholding the corresponding embedding values. }
\label{ex_comparison}
\end{figure}

\subsection{Graph Neural Networks and Spectral Embeddings}

GNNs most commonly operate under a message-passing scheme, which iteratively applies two general steps: a message-passing step, where nodes aggregate information from their neighbors, and an 
update step, where nodes update their representations or learned embeddings using that aggregated information. The starting point of such representations is usually taken to be a set of node (and/or edge) features. Stacking $L$ such layers of message passing together yields node representations that encode structural information from neighborhoods up to $L$ hops away. This message-passing scheme encompasses a wide variety of popular GNN architectures, including Graph Convolutional Networks (GCNs) \citep{kipf2017semisupervisedclassificationgraphconvolutional}, GraphSAGE \citep{hamilton2018inductiverepresentationlearninglarge}, and Graph Isomorphism Networks (GINs) \citep{xu2019powerfulgraphneuralnetworks}. The choice of how the messages are passed or aggregated and how the node representations are updated is often what distinguishes GNN models from one another.

Given the importance of node features for GNN learning, in many applications it is often of interest to supplement node features by spectral embeddings \citep{said2023enhanced, lim2022sign}. Since spectral embeddings are eigenvectors of symmetric matrices, they are subject to certain symmetries: for any eigenvector $\bz \in \mathbb{R}^n$, the negated vector $-\bz$ is also an eigenvector associated with the same eigenvalue. Moreover, in cases of eigenvalue multiplicity, the corresponding eigenspace admits infinitely many valid orthonormal bases up to rotations. SignNet and BasisNet \citep{lim2022sign} are general frameworks that allow for the use of spectral embeddings in GNNs while respecting symmetry constraints. SignNet achieves invariance to sign flips by symmetrizing the response of a neural network $\phi$ applied to each eigenvector individually. Given $k$ eigenvectors $\bz_1, \dots, \bz_k \in \mathbb{R}^n$, SignNet defines the mapping $f(\bz_1, \dots, \bz_k) \;=\; \rho \Big( \big[ \phi(\bz_i) + \phi(-\bz_i) \big]_{i=1}^k \Big),$
where $\rho$ is an additional learnable aggregation function. By construction, $f$ remains unchanged under any sign flip, thereby ensuring consistency of learned representations across different eigenbasis choices. BasisNet operates analogously for rotations for graphs that have repeated eigenvalues: given a matrix of eigenvectors $\bZ = [\bz_1, \cdots, \bz_k] \in \mathbb{R}^{n \times k}$ as input, BasisNet considers the associated orthogonal projector matrix $\bZ\bZ^\top$ since it is invariant to rotations. We defer to \cite{lim2022sign} for details. Rather than appending spectral embeddings directly as node features, SignNet and/or BasisNet first process the embeddings, and their outputs are then appended as features. While BasisNet is only applicable for eigenvectors with repeated eigenvalues, SignNet should always be used since sign-flip symmetry affects all eigenvectors. 

\section{Interpolated Laplacian Embeddings}\label{sec:interpolated family}

Motivated by the success of feature augmentation with Laplacian spectral embeddings in GNNs, as well as the observation that different graph matrices can encode different graph structural information, we ask whether spectral embeddings from more general families of graph matrices can further improve performance. To this end, we propose a simple family of graph matrices that subsumes many classical graph matrices as special cases. 

\subsection{A Family of Graph Matrices}\label{sec:ILE}

One of the most general families of graph matrices proposed in the literature is the universal adjacency matrices family \citep{haemers2011universal}, which takes the form \begin{align}
    \bM(\alpha, \beta, \kappa, \zeta) = \alpha \bD + \beta \bA + \kappa \bJ + \zeta \bI, 
\end{align}
where $\bJ$ is the all-ones matrix, $\bI$ is the identity matrix, and $\alpha, \beta, \kappa, \zeta \in \mathbb{R}$. This family has natural mathematical properties and includes many of the most important graph matrices as special cases, such as the Laplacian, the signless Laplacian, and the Seidel matrix \citep{haemers2011universal}. Despite the richness, the fact that this family depends on four tuning parameters makes it less practical for use in machine learning applications. 

Observing that $\bJ$ and $\bI$ are matrices that do not reflect structural information of the underlying graph, we consider the following two-parameter family of graph matrices that we call the interpolated Laplacians: 
\begin{align}
    \bM(t, s) \equiv t\bD- s\bA, 
\end{align}
where $t, s \in \mathbb{R}$. Here, we drop the $\bJ$ and $\bI$ components from the universal adjacency matrix family, and without loss of generality adopt a subtraction-based formulation to mimic the form of the Laplacian. We call spectral embeddings constructed from the eigenvectors of the interpolated Laplacian family of matrices Interpolated Laplacian Embeddings (ILEs). Below, we interpret the information captured by ILEs. 

\subsection{Interpretation of Interpolated Laplacian Embeddings}
An important way to interpret eigenvectors is as solutions of certain constrained optimization problems on quadratic forms. 
Given any symmetric matrix $\bM \in \mathbb{R}^{n \times n}$ and nonzero vector $\bx \in \mathbb{R}^n$, 
its Rayleigh quotient is defined as
\begin{align}
 R_{\bM}(\bx) \;=\; \frac{\bx^\top \bM \bx}{\bx^\top \bx}.   
\end{align}
Note that without loss of generality, it suffices to consider only unit vectors $\bx$, whence $R_{\bM}(\bx)$ is just the quadratic form $\bx^\top \bM \bx$. 

Let $\lambda_1 \leq \lambda_2 \leq \cdots \leq \lambda_n$ denote the ordered eigenvalues of $\bM$. The Courant--Fischer theorem \citep{spielman2019spectral} states that the extrema of $R_{\bM}(\bx)$ occur at the eigenvectors of $\bM$ with the corresponding optimal values being the eigenvalues of $\bM$. More concretely, we have the following expressions for the eigenvalues
\begin{align}
    \lambda_{n} &= \max_{\bx \neq 0} R_{\bM}(\bx), 
\quad 
\lambda_{1} = \min_{\bx \neq 0} R_{\bM}(\bx), \\
\lambda_k 
&= \min_{S \in \mathbb{R}^n,\;\dim S = k} \;\max_{\bx \in S \setminus \{0\}} R_{\bM}(\bx) = \max_{S \in \mathbb{R}^n,\;\dim S = n - k + 1} \;\min_{\bx \in S \setminus \{0\}} R_{\bM}(\bx),  
\end{align}
for any $1\leq k \leq n$. 
The eigenvectors can be characterized by
\begin{align}
    \bz_1 &\in \argmin_{||\bx|| = 1} \bx^\top\bM\bx, \quad
        \bz_k \in \argmin_{||\bx|| = 1, \;\bx \perp \bz_1, \cdots, \bz_{k -1}} \bx^\top\bM\bx, \\
        \bz_k &\in \argmax_{||\bx|| = 1, \;\bx \perp \bz_{k +1}, \cdots, \bz_{n}} \bx^\top\bM\bx, 
\end{align}
for any $2 \leq k \leq n$. 

Thus, we can interpret the first $k$ eigenvectors of $\bM$ as a set of vectors that collectively 
minimizes the Rayleigh quotient. This optimization perspective provides a way for us to interpret the resulting embeddings. 
  
\paragraph{Interpreting $\bM(t, s)$ via quadratic forms}  
Consider the matrix family $
\bM(t,s) = t \bD - s \bA$, where $t,s \in \mathbb{R}$.  
For any unit vector $\bx \in \mathbb{R}^n$, its quadratic form is given by \begin{align}
\bx^\top \bM(t,s) \bx \;=\; t \, 
\bx^\top \bD x - s \, \bx^\top \bA \bx.    
\end{align}

Expanding each term:
\begin{align}
    \bx^\top \bD \bx &= \sum_{i=1}^n \deg(i) \bx_i^2 
= \sum_{(i,j)\in E} w_{ij}\,(\bx_i^2 + \bx_j^2),\\
\bx^\top \bA \bx &= \sum_{i,j} A_{ij} \bx_i \bx_j 
= 2 \sum_{(i,j)\in E} w_{ij}\, \bx_i \bx_j.
\end{align}
This gives 
\begin{align}
\bx^\top \bM(t,s) \bx 
= \sum_{(i,j)\in E} w_{ij}\,\Big[ t(\bx_i^2 + \bx_j^2) - 2s \bx_i \bx_j \Big] = \sum_{(i,j)\in E} w_{ij}\,\Big[ t(\bx_i - \bx_j)^2 - 2(s-t) \bx_i \bx_j \Big].\label{quadratic}
\end{align}

This makes apparent that the objective combines the terms $\sum_{(i,j)\in E}w_{ij}(\bx_i - \bx_j)^2$ and $2\sum_{(i,j)\in E}w_{ij}\bx_i\bx_j$, which enables various tradeoffs. 

It is well known that the term $\sum_{(i,j)\in E}w_{ij}(\bx_i - \bx_j)^2$ is equal to the Laplacian quadratic form $\bx^\top\bL\bx$  \citep{spielman2019spectral, von2007tutorial}. This objective penalizes variation of node embeddings across heavily weighted edges, encouraging strongly connected nodes to take similar values and thereby capturing community structure in the graph.

On the other hand, $2\sum_{(i,j)\in E}w_{ij}\bx_i\bx_j$ is equal to the adjacency quadratic form $\bx^\top\bA\bx$. Maximizing this objective (equivalent to minimizing its negation in \eqref{quadratic}) corresponds to assigning larger embedding values for nodes that have a high degree of connectivity (large degree), thus capturing core-periphery structures. This provides an alternative explanation of adjacency embeddings without resorting to the asymptotic arguments and modeling assumptions used in \cite{priebe2019two}.  
Taken together, these characterizations show that varying $t$ and $s$ yields embeddings that balance core–periphery and community structure differently. If node labels in a classification task depend on some combination of these structures, then spectral embeddings from a suitably chosen $\bM(s,t)$ may offer an advantage.

\subsection{Richness of Representation}
In defining the matrix family $\bM(t, s)$, we have dropped two components corresponding to $\bI$ and $\bJ$ in the universal adjacency matrix family $\bM(\alpha, \beta, \kappa, \zeta)$ for a more parsimonious representation that is conducive to machine learning practice. Below, we show that dropping the $\bI$ component does not affect the richness of our spectral embeddings. This is exemplified by the following lemma: 

\begin{lemma} \label{equiv}
Let $\bM \in \mathbb{R}^{n \times n}$ be a symmetric matrix with no repeated eigenvalues, and consider its perturbation $\bM + \zeta \bI$ for some scalar $\zeta \in \mathbb{R}$. Then the eigenvectors of $\bM$ and $\bM + \zeta \bI$ coincide, and the eigenvalues of $\bM + \zeta \bI$ are shifted by $\zeta$ relative to their counterparts in $\bM$.
\end{lemma}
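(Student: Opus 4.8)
The plan is to prove this by direct verification on an eigenbasis of $\bM$, and then to use the distinct-eigenvalue hypothesis to upgrade ``each $\bz_i$ is also an eigenvector'' into the stronger statement that the eigenvector families coincide under an index-preserving correspondence. First I would invoke symmetry of $\bM$ to fix an orthonormal eigenbasis $\bz_1, \dots, \bz_n$ with $\bM \bz_i = \lambda_i \bz_i$, where the no-repeated-eigenvalue assumption lets me write the ordering strictly as $\lambda_1 < \lambda_2 < \cdots < \lambda_n$.

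The central computation is a single line: for each $i$,
\begin{align}
(\bM + \zeta \bI)\bz_i = \bM \bz_i + \zeta \bz_i = \lambda_i \bz_i + \zeta \bz_i = (\lambda_i + \zeta)\bz_i,
\end{align}
so every $\bz_i$ is an eigenvector of $\bM + \zeta \bI$ with eigenvalue $\lambda_i + \zeta$. This already establishes the eigenvalue-shift claim. To conclude that the \emph{eigenvectors} coincide, I would note that adding the same scalar $\zeta$ to every eigenvalue preserves the strict ordering, i.e. $\lambda_1 + \zeta < \cdots < \lambda_n + \zeta$; hence $\bM + \zeta \bI$ also has $n$ distinct eigenvalues, each of its eigenspaces is one-dimensional, and its eigenvectors are therefore determined uniquely up to sign. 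Since the $\{\bz_i\}$ already supply $n$ eigenvectors of $\bM + \zeta \bI$ whose eigenvalues are sorted in the same order, the $i$-th eigenvector of $\bM + \zeta \bI$ must equal $\bz_i$ up to sign, matching the $i$-th eigenvector of $\bM$.

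The computation itself is trivial, so the only point genuinely requiring care is the role of the distinct-eigenvalue hypothesis, which is where I expect a careful reader to look. Without distinctness, an eigenspace of dimension greater than one would admit infinitely many orthonormal bases, and the phrase ``the eigenvectors coincide'' would be ambiguous since one could rotate freely within a repeated eigenspace. The distinctness assumption is exactly what makes the index-to-eigenvector map well defined (modulo the unavoidable sign symmetry already noted in the paper), so that the statement holds as written. I would therefore emphasize this hypothesis explicitly rather than treat it as a technicality.
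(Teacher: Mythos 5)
Your proof is correct and rests on the same one-line computation $(\bM+\zeta\bI)\bz_i=(\lambda_i+\zeta)\bz_i$ that the paper uses. The only difference is that you also close the converse direction --- arguing via the preserved strict ordering that $\bM+\zeta\bI$ has $n$ distinct eigenvalues, hence one-dimensional eigenspaces, so it acquires no eigenvectors beyond the $\bz_i$ --- a point the paper's proof leaves implicit, so your version is if anything slightly more complete.
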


\begin{proof}
Let $(\lambda, \bx)$ be an eigenpair of $\bM$, i.e. $
\bM \bx = \lambda \bx
$ where, $\bx \neq 0$. 
Then
$
(\bM + \zeta \bI)\bx = \bM \bx + \zeta \bI \bx = (\lambda + \zeta)\bx.
$
Thus $\bx$ is also an eigenvector of $\bM + \zeta \bI$ with eigenvalue $\lambda + \zeta$. 
\end{proof}
Note that the same statement applies when $\bM$ has repeated eigenvalues, except it would be the eigenspaces rather than the eigenvectors that are identical. 

The above implies that not only are the eigenvectors of $\bM$ unchanged when it is perturbed by $\zeta \bI$, but the ordering of the eigenvalues is also unchanged. In our context of selecting the first $k$ eigenvectors for spectral embeddings, the above lemma shows that $\bM(t, s)$ yields the same expressivity of spectral embeddings as $\bM(t, s) + \zeta \bI$. 

While it is immediate that certain natural choices of $(t,s)$ reduce to well-known matrices (e.g., $t=1, s=1$ for the Laplacian, $t=0, s=-1$ for the adjacency, and $t=1, s=-1$ for the signless Laplacian), Lemma \ref{equiv} allows us to show the less obvious fact that embeddings from $\bM(t,s)$ subsume the embeddings generated by the deformed Laplacian graph matrix family.  

Recall the deformed Laplacian matrices \citep{grindrod2018deformed}
\begin{align}
    \bL_{\mathrm{deformed}}(q) \;=\; \bI - q\bA + q^2(\bD - \bI), \quad q \in \mathbb{R}.
\end{align}
Note that in the original formulation, $q \in \mathbb{C}$ is allowed for full mathematical generality; here we only consider real $q$ for our machine learning context. This family of matrices is important in graph theory, as it is known to capture meaningful centrality measures via non-backtracking random walks \citep{grindrod2018deformed}.  

Expanding, we obtain
\[
\bL_{\mathrm{deformed}}(q) \;=\; (1 - q^2)\bI + q^2 \bD - q\bA.
\]
Thus, setting $t = q^2$ and $s = q$, and applying Lemma \ref{equiv}, we see that the eigenvectors and thus spectral embeddings of $\bM(t,s)$ contains those generated by $\bL_{\mathrm{deformed}}(q)$.

The above results demonstrate the richness of the interpolated Laplacian family $\bM(t, s)$. We next consider testing this family in empirical experiments. 
\section{Experiments and Simulations}\label{experimental results}
We empirically test the hypothesis that alternative spectral embeddings can potentially lead to downstream performance gains compared to the Laplacian case. 

\subsection{Experimental Setup}
We evaluate the effectiveness of ILEs in the context of node classification, where ILEs are used to augment node features in GNNs. The resulting models are then assessed based on their classification accuracy. Our experiments cover a range of datasets and models. 

We consider three settings. \begin{enumerate}
    \item We first consider a synthetic scenario with Stochastic Block Models (SBMs) consisting of two blocks of equal size. Node labels correspond to block membership.  Following \cite{priebe2019two}, we choose connectivity parameters that lead to two graphs exhibiting core–periphery and community structures, respectively. See Figure \ref{fig:sbm} for illustration. 
    \item We consider network datasets that do not come with node features. These include the Karate Club~\citep{karateclub}, Twitter Congress~\citep{fink2023twitter}, Facebook Ego~\citep{NIPS2012_7a614fd0}, and Political Blogs (Polblogs)~\citep{10.1145/1134271.1134277} datasets. We use ILEs as node features on these datasets. 
    \item We consider network datasets that come with node features, which include the 
    Cornell~\citep{Pei2020Geom-GCN:}, Texas~\citep{Pei2020Geom-GCN:}, and Wisconsin~\citep{Pei2020Geom-GCN:}
    datasets. We corrupt the original node features with various levels of noise, and append ILEs as extra node features to observe performance. 
\end{enumerate}
  We consider three representative models: GCN~\citep{kipf2017semisupervisedclassificationgraphconvolutional}, GIN~\citep{xu2019powerfulgraphneuralnetworks}, and GraphSAGE~\citep{hamilton2018inductiverepresentationlearninglarge}. We also use a 5-layer MLP as a baseline. Detailed descriptions of all datasets are provided in Appendix~\ref{appendix:datasets}.

We adopt a train-test split using a 70/30 ratio. To account for stochasticity, each experiment is repeated 5 times, and we report the mean and standard deviation of the resulting classification accuracies.

We try ILEs with $s$ taking values in the range $[-1, -0.5, 0, 0.5, 1]$ and $t$ taking values in the range $[-1, -0.5, 0.5, 1]$. The case $s = 1, t = 1$ corresponds to the Laplacian case. We omit the case of $t = 0$ since it corresponds to various scalings of the adjacency matrix.

\subsection{SBM Simulations}
\begin{figure}[t]
    \centering
    \begin{minipage}[b]{0.48\textwidth}
        \centering
        \includegraphics[width=\textwidth]{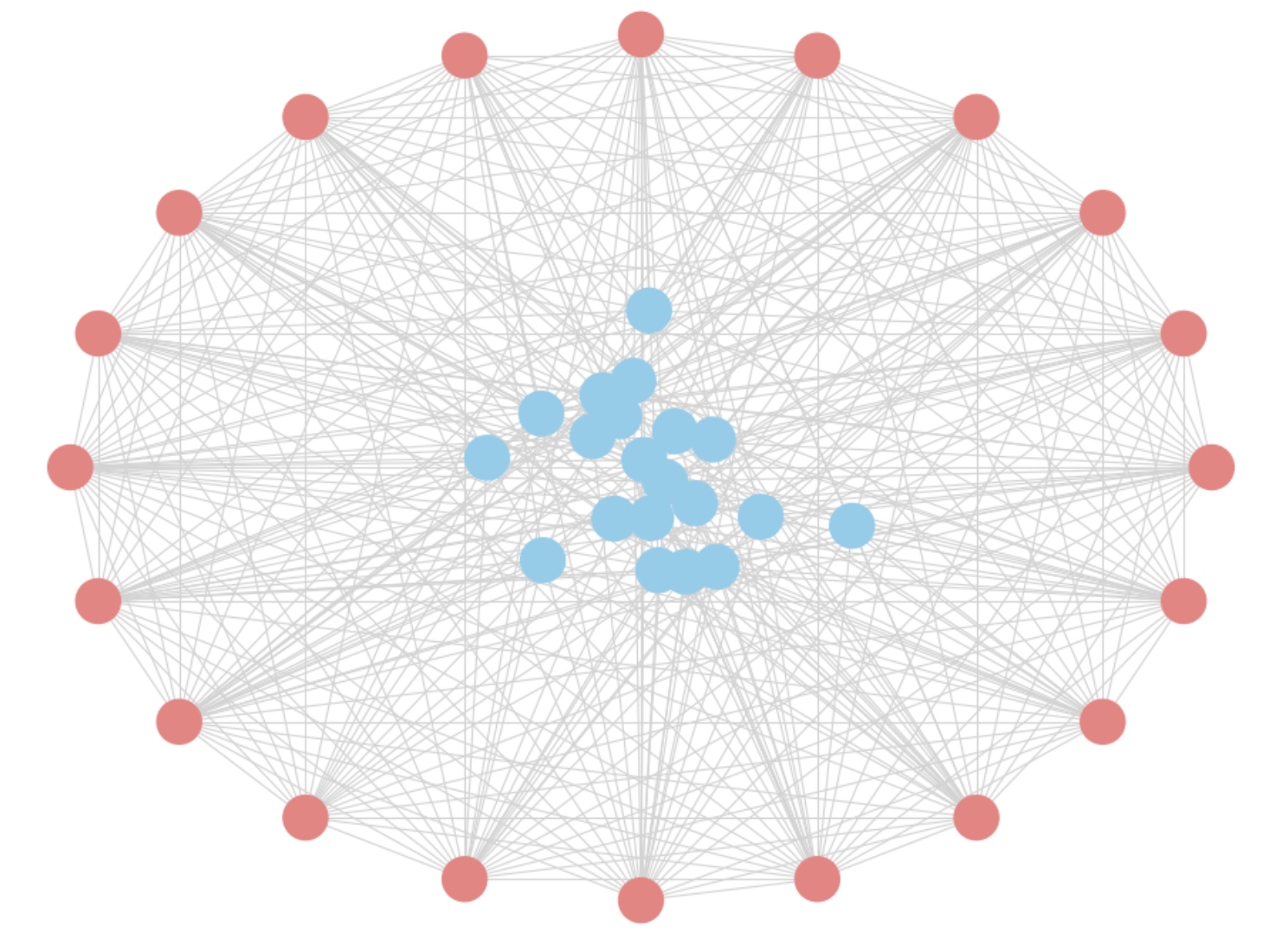}
        \caption*{(a) Core-Periphery}
    \end{minipage}
    \hfill
    \begin{minipage}[b]{0.48\textwidth}
        \centering
        \includegraphics[width=\textwidth]{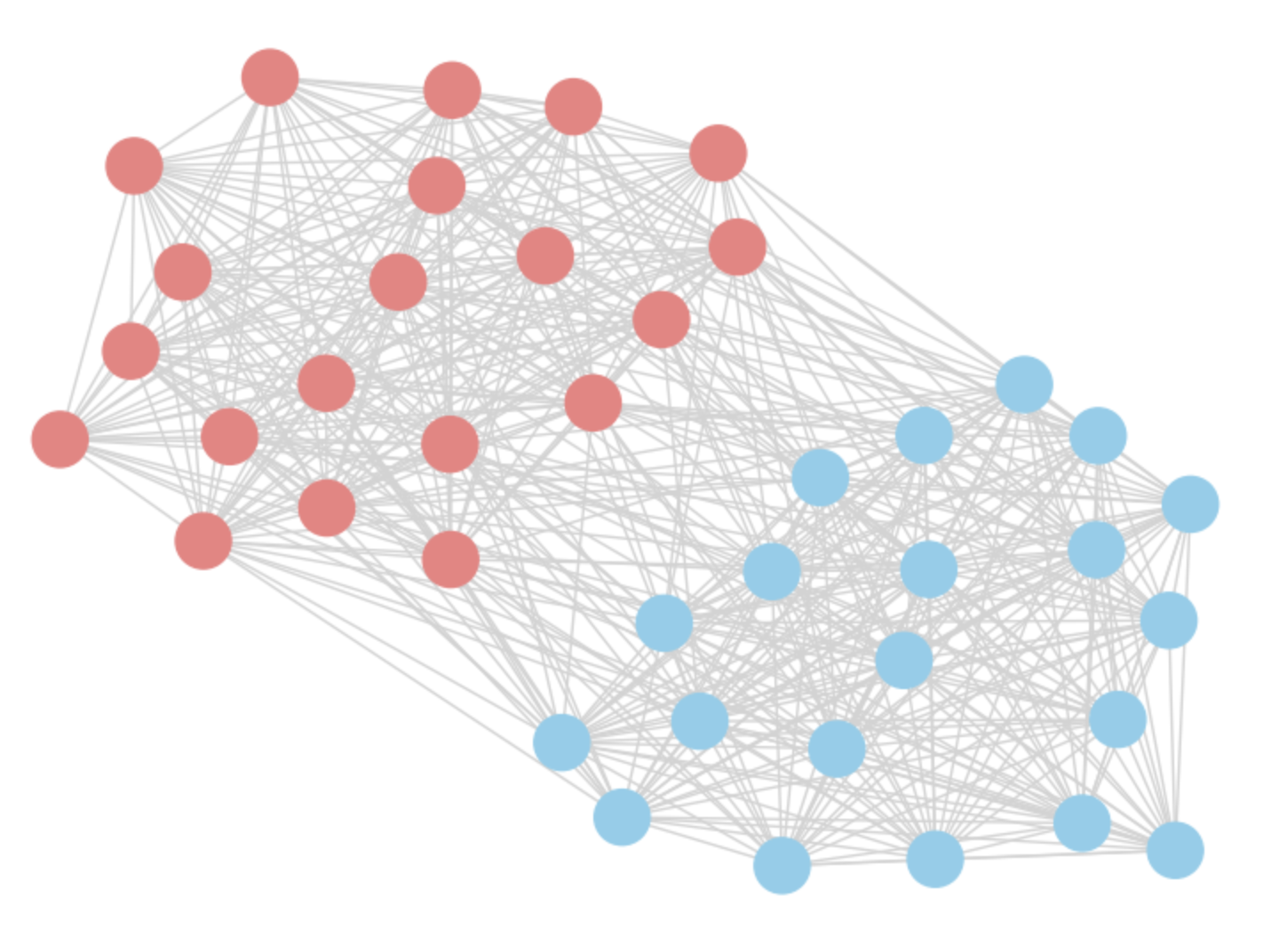}
        \caption*{(b) Community}
    \end{minipage}
    \caption{Illustration of two realizations of SBM graphs that exhibit  (a) core-periphery and (b) community structures.}
    \label{fig:sbm}
\end{figure}

As shown in Table \ref{SBM_table}, the best-performing embeddings across the datasets are members of ILEs that do not correspond to Laplacian embeddings. Observe that the GCN model has the best performance, whereas the MLP model, which did not leverage graph convolutional operations, performed roughly at $50$ percent accuracy. Since the Core-Periphery case is largely determined by the degree representation, we would expect that the embeddings that are most informative about high-degree nodes ($t =1$) or low-degree nodes ($t = -1$) to enjoy good performance. Indeed, for GCN, GIN, and GraphSAGE, the best performance came from the $t = -1$ and $t = 1$ columns.  

In the community structure case, we would expect that embeddings that are close to the Laplacian or Signless Laplacian embeddings would capture the community structure much better than adjacency-based embeddings. Indeed, across the models, the ILE embeddings performed better than the adjacency embeddings. Note that GCN is a special case since its convolutional operation includes a normalized Laplacian component, so it is unsurprising that it is able to detect community patterns better than other models.  
\begin{table}[ht]  
\centering
\caption {Testing accuracy (\%) of different models augmented with spectral embedding variants under SBM graphs that exhibit core-periphery and community structure.} \label{SBM_table} 
\resizebox{\textwidth}{!}{%
\begin{tabular}{ll|cccc|cccc}
\toprule
\textbf{Model} & \textbf{Variant} & \multicolumn{4}{c|}{\textbf{SBM Core-Periphery}} & \multicolumn{4}{c}{\textbf{SBM Community}} \\
\cmidrule(lr){3-6} \cmidrule(lr){7-10}
 &  & $t=-1$ & $t=-0.5$ & $t=0.5$ & $t=1$ & $t=-1$ & $t=-0.5$ & $t=0.5$ & $t=1$ \\
\midrule
GCN & None & \multicolumn{4}{c|}{$45.33 (4.64)$} & \multicolumn{4}{c}{$48.00 (1.94)$} \\
 & Adjacency & \multicolumn{4}{c|}{$94.33 (3.43)$} & \multicolumn{4}{c}{$99.67 (0.67)$} \\
 & $s=-1$ & $94.33 (4.03)$ & $94.67 (1.94)$ & $93.67 (1.94)$ & $92.00 (4.00)$ & $96.67 (4.35)$ & $99.00 (0.82)$ & $96.00 (6.38)$ & $99.67 (0.67)$ \\
 & $s=-0.5$ & $94.33 (4.16)$ & $89.00 (4.67)$ & $95.00 (2.36)$ & $92.00 (1.25)$ & $99.33 (1.33)$ & $98.67 (1.25)$ & $99.67 (0.67)$ & $94.67 (9.85)$ \\
 & $s=0$ & $\mathbf{97.00 (1.94)}$ & $88.00 (4.00)$ & $93.33 (2.36)$ & $91.67 (3.50)$ & $99.33 (0.82)$ & $98.33 (1.49)$ & $98.67 (1.25)$ & $\mathbf{100.00 (0.00)}$ \\
 & $s=0.5$ & $96.00 (3.27)$ & $91.67 (3.80)$ & $91.67 (3.50)$ & $92.33 (4.90)$ & $96.33 (6.53)$ & $97.00 (4.40)$ & $\mathbf{100.00 (0.00)}$ & $99.33 (0.82)$ \\
 & $s=1$ & $93.33 (1.83)$ & $92.67 (4.29)$ & $91.67 (2.98)$ & $95.33 (2.87)$ & $72.67 (25.66)$ & $98.00 (4.00)$ & $96.00 (3.74)$ & $98.33 (0.00)$ \\
\midrule
MLP & None & \multicolumn{4}{c|}{$43.67 (3.23)$} & \multicolumn{4}{c}{$45.33 (2.87)$} \\
 & Adjacency & \multicolumn{4}{c|}{$45.67 (3.43)$} & \multicolumn{4}{c}{$45.67 (3.74)$} \\
 & $s=-1$ & $52.33 (3.27)$ & $\mathbf{57.33 (3.43)}$ & $48.33 (4.35)$ & $50.33 (7.18)$ & $52.33 (6.11)$ & $47.33 (6.80)$ & $48.00 (5.52)$ & $\mathbf{55.33 (4.14)}$ \\
 & $s=-0.5$ & $50.00 (6.83)$ & $54.67 (3.23)$ & $48.67 (9.85)$ & $47.67 (8.79)$ & $47.00 (5.62)$ & $49.33 (7.57)$ & $50.00 (3.33)$ & $45.67 (4.55)$ \\
 & $s=0$ & $45.67 (5.12)$ & $49.33 (5.44)$ & $46.33 (5.31)$ & $51.33 (3.56)$ & $51.67 (3.33)$ & $50.00 (7.67)$ & $47.67 (4.78)$ & $47.00 (6.00)$ \\
 & $s=0.5$ & $50.00 (7.89)$ & $48.67 (5.42)$ & $47.67 (7.20)$ & $47.00 (6.53)$ & $51.00 (8.00)$ & $54.00 (7.93)$ & $50.67 (6.55)$ & $51.67 (6.41)$ \\
 & $s=1$ & $52.33 (4.90)$ & $51.33 (3.23)$ & $51.67 (6.50)$ & $49.33 (4.67)$ & $50.00 (5.96)$ & $46.67 (4.22)$ & $51.33 (5.81)$ & $51.00 (4.67)$ \\
\midrule
GIN & None & \multicolumn{4}{c|}{$50.33 (4.00)$} & \multicolumn{4}{c}{$43.00 (6.18)$} \\
 & Adjacency & \multicolumn{4}{c|}{$74.67 (22.98)$} & \multicolumn{4}{c}{$84.67 (15.43)$} \\
 & $s=-1$ & $77.67 (16.62)$ & $74.00 (23.49)$ & $68.00 (15.54)$ & $83.00 (17.04)$ & $87.67 (10.62)$ & $91.00 (6.63)$ & $87.33 (9.64)$ & $87.67 (16.62)$ \\
 & $s=-0.5$ & $87.00 (6.09)$ & $82.00 (13.43)$ & $73.67 (23.46)$ & $\mathbf{91.67 (3.16)}$ & $91.00 (6.72)$ & $83.00 (17.90)$ & $89.33 (6.02)$ & $83.67 (19.45)$ \\
 & $s=0$ & $79.00 (14.67)$ & $77.67 (18.15)$ & $80.00 (17.09)$ & $88.00 (10.67)$ & $\mathbf{96.67 (3.80)}$ & $79.67 (17.99)$ & $91.33 (4.88)$ & $75.67 (22.45)$ \\
 & $s=0.5$ & $81.00 (16.28)$ & $88.33 (13.00)$ & $81.00 (19.91)$ & $78.67 (19.04)$ & $95.33 (2.87)$ & $93.67 (4.14)$ & $92.00 (8.72)$ & $96.00 (1.33)$ \\
 & $s=1$ & $67.33 (17.11)$ & $81.33 (10.97)$ & $76.67 (18.41)$ & $54.67 (20.61)$ & $94.67 (6.78)$ & $96.00 (2.26)$ & $92.33 (5.01)$ & $94.00 (3.59)$ \\
\midrule
GraphSAGE & None & \multicolumn{4}{c|}{$48.33 (4.47)$} & \multicolumn{4}{c}{$48.33 (5.48)$} \\
 & Adjacency & \multicolumn{4}{c|}{$60.00 (8.88)$} & \multicolumn{4}{c}{$56.00 (10.62)$} \\
 & $s=-1$ & $57.00 (16.78)$ & $77.00 (19.76)$ & $66.67 (18.35)$ & $70.00 (13.82)$ & $93.33 (6.58)$ & $90.67 (5.01)$ & $86.67 (6.99)$ & $86.00 (8.67)$ \\
 & $s=-0.5$ & $68.00 (18.18)$ & $73.00 (4.76)$ & $79.33 (11.58)$ & $75.00 (20.52)$ & $85.33 (7.99)$ & $84.33 (12.85)$ & $84.33 (12.36)$ & $87.00 (7.99)$ \\
 & $s=0$ & $72.67 (12.00)$ & $75.00 (13.98)$ & $82.67 (12.54)$ & $70.00 (17.48)$ & $88.33 (10.70)$ & $89.33 (5.12)$ & $83.33 (4.71)$ & $84.33 (10.09)$ \\
 & $s=0.5$ & $\mathbf{88.67 (8.06)}$ & $73.00 (14.73)$ & $71.00 (14.55)$ & $79.67 (13.60)$ & $81.00 (16.38)$ & $85.00 (9.94)$ & $91.67 (5.48)$ & $\mathbf{93.67 (4.40)}$ \\
 & $s=1$ & $66.67 (8.94)$ & $71.33 (10.82)$ & $69.33 (17.34)$ & $67.67 (9.35)$ & $83.67 (8.39)$ & $91.67 (6.75)$ & $81.00 (8.07)$ & $80.00 (16.30)$ \\
\bottomrule
\end{tabular}}
\end{table}

\subsection{Datasets without Features}
In our experiments with the datasets that did not come with node features, we again consistently see the pattern where spectral feature augmentation with ILEs leads to better results than the Laplacian embeddings ($s = 1, t = 1$). Results are shown in Table \ref{table_no_feat} in the appendix. 

\subsection{Datasets with Node Features Corrupted by Noise}
We also considered three datasets that come with node features (Texas, Cornell, and Wisconsin). Here, we corrupt the original node features with Gaussian noise of different magnitudes. Results for the datasets Texas, Cornell, and Wisconsin are shown in Tables \ref{table_texas}, \ref{table_cornell}, and \ref{table_wisconsin}, respectively, in the appendix. We observe a pattern where the larger the noise perturbation, generally the worse the performance. We also observe that at low noise corruption levels, the ILE families that perform well in general (e.g. the $t= 0.5$ column in the Wisconsin dataset under $0$ percent and $10$ percent corruption in Table \ref{table_wisconsin}) are generally stable. At high noise corruption levels, the numerous noisy node features appear to overpower the effect of the spectral embeddings. 

\subsection{Selecting Hyperparameters}
A main goal of this paper is to show that more general spectral embeddings can outperform Laplacian embeddings. While this observation holds true in our experiments, which look at options in an exhaustive fashion, in practice users might have to select certain tuning parameters, including the dimension of the spectral embeddings $k$, as well as the tuning parameters $s,t$ for the interpolated Laplacian family. 

For selecting the spectral dimension $k$ for Laplacian-based families, the multi-way Cheeger's inequality \citep{lee2014multiway} suggests that $k$ should be selected as the number of communities in the underlying graph. If no such prior information is available, then a conservative $k$ (an upper bound on the number of communities) should be chosen. For general graph matrices, one can adopt the commonly used strategy of inspecting Scree plots and determining $k$ according to the location of the elbow \citep{cattell1966scree}. 

As for the selection of $s, t$, if one has prior information on the graph's underlying structure (e.g. community versus core-periphery) and their relationship to the node labels, one can selectively try appropriate ranges of $s, t$ according to the interpretation provided in Section~\ref{sec:interpolated family}. A slightly more computationally intensive approach would be to compute some correlation measure between the spectral embeddings and the node labels in a training or validation set. Yet another more computationally intensive approach is to perform cross-validation to select the $s, t$ that leads to the highest validation accuracy.  
\subsection{Computational Time}
The computational complexity of a na\"ive spectral decomposition of an $n \times n$ matrix is on the order of $O(n^3)$. However, fast numerical and/or iterative schemes (e.g. Lanczos algorithm, Power method, Preconditioned methods, etc.) are often available and can be highly effective in practice for the computation of only the top $k$ eigenvectors \citep{press2007numerical}. Some of these iterative methods enjoy linear convergence rates. Moreover, fast numerical methods are often available for sparse graphs or for the approximate computation of eigenvectors, which may suffice for the application at hand. 

\section{Discussion}

This work broadens the scope of spectral augmentation in GNNs by moving beyond the Laplacian to a unified family of graph matrices that we term Interpolated Laplacian Embeddings (ILEs). Our theoretical analysis provides an interpretable characterization of the structural properties that different parameterizations of ILEs capture, ranging from community to core–periphery structure. Empirically, we demonstrated that the choice of graph matrix can substantially influence downstream node classification performance, particularly in the limited- or missing-feature regimes.  Natural directions of future research include generalization of such approaches to directed graphs, to graphs with edge features, and to graphs with dynamic, temporal, or heterophilic structures. The theoretical and empirical aspects of non-Laplacian operators in graph machine learning in general are understudied compared to the canonical Laplacian case. We hope that this work motivates more systematic studies of spectral embeddings beyond the Laplacian and inspires new methods that more effectively leverage different types of graph structures for representation learning.

\bibliography{iclr2026_conference}
\bibliographystyle{iclr2026_conference}

\appendix
\section{LLM Statement}
Language models are used to check for grammatical mistakes. Language models were also used to select appropriate wordings and improve sentence flow.
\section{Dataset Details}
\label{appendix:datasets}

We first evaluate our family of spectral embeddings on synthetic networks generated from SBMs. The SBM is a fundamental generative model for random graphs with community or group structure. In its general form, the model partitions nodes into latent blocks and specifies the probability of an edge between any two nodes as a function of their block memberships. By varying the block structure and edge probabilities, the SBM is able to capture a wide range of network patterns. In our simulations, we generate networks with 1000 nodes. To study a core–periphery structure, we divide the nodes evenly into a 500-node core and a 500-node periphery. Edges between core nodes are drawn with high probability (90\%), edges between core and periphery nodes with intermediate probability (50\%), and edges between periphery nodes with low probability (10\%). To study a community structure, we again partition the 1000 nodes into two equal groups of 500. In this case, the within-group edge probability is very high (99\%), while the between-group probability is comparatively low (30\%). Figure~\ref{fig:sbm} illustrates example realizations of the core–periphery and community SBM graphs using 40 nodes.

We next evaluate ILEs on real-world networks that do not contain node features. In these settings, we demonstrate that augmenting the nodes with ILEs substantially improves classification accuracy. The datasets in this group are Zachary’s Karate Club~\citep{karateclub}, a social network of 34 nodes and 78 edges partitioned into four classes; Twitter Congress~\citep{fink2023twitter}, a network of 475 nodes and 13,289 edges representing interactions among members of the 117th United States Congress; Facebook Ego~\citep{NIPS2012_7a614fd0}, a social network of 4,039 nodes and 88,234 edges capturing ego-centric friendship circles on Facebook; and Political Blogs (Polblogs)~\citep{10.1145/1134271.1134277}, a network of 1,490 nodes and 19,025 edges, with binary labels indicating the political leaning of online blogs. The Twitter Congress and Facebook Ego datasets lack ground-truth node labels. To enable evaluation, we construct artificial labels based on node degree: nodes in the top 20\% by degree are assigned label 1, and all others are assigned label 0. 

Finally, we evaluate the ILE family on real-world datasets with node attributes. In this setting, we show that augmenting node features with ILEs improves classification accuracy both under standard conditions and in scenarios where the original features are corrupted. The datasets in this group are 
Cornell~\citep{Pei2020Geom-GCN:}, a web network with 183 nodes, 298 edges, 1,703 features, and five classes, where nodes correspond to university web pages and edges represent hyperlinks; Texas~\cite{Pei2020Geom-GCN:}, a structurally similar university web network with 183 nodes, 325 edges, 1,703 features, and five classes; and Wisconsin~\citep{Pei2020Geom-GCN:}, another university web network with 251 nodes, 515 edges, 1,703 features, and five classes.

\section{Further Experimental Results}
\begin{table}[ht] 
\caption{Testing accuracy (\%) of different models augmented with spectral embedding variants under various datasets that did not come with node features.}\label{table_no_feat}
\centering
\resizebox{\textwidth}{!}{%
\begin{tabular}{ll|cccc|cccc|cccc|cccc}
\toprule
\textbf{Model} & \textbf{Variant} & \multicolumn{4}{c|}{\textbf{Karate Club}} & \multicolumn{4}{c|}{\textbf{Twitter Congress}} & \multicolumn{4}{c|}{\textbf{Facebook Ego}} & \multicolumn{4}{c}{\textbf{Polblogs}} \\
\cmidrule(lr){3-6} \cmidrule(lr){7-10} \cmidrule(lr){11-14} \cmidrule(lr){15-18}
 &  & $t=-1$ & $t=-0.5$ & $t=0.5$ & $t=1$ & $t=-1$ & $t=-0.5$ & $t=0.5$ & $t=1$ & $t=-1$ & $t=-0.5$ & $t=0.5$ & $t=1$ & $t=-1$ & $t=-0.5$ & $t=0.5$ & $t=1$ \\
\midrule
GCN & None & \multicolumn{4}{c|}{$29.09 (8.91)$} & \multicolumn{4}{c|}{$79.58 (2.23)$} & \multicolumn{4}{c|}{$79.55 (0.77)$} & \multicolumn{4}{c}{$51.41 (0.78)$} \\
 & Adjacency & \multicolumn{4}{c|}{$92.73 (8.91)$} & \multicolumn{4}{c|}{$84.76 (2.36)$} & \multicolumn{4}{c|}{$87.34 (0.84)$} & \multicolumn{4}{c}{$83.45 (4.46)$} \\
 & $s=-1$ & $\mathbf{92.73 (6.80)}$ & $81.82 (9.96)$ & $81.82 (9.96)$ & $76.36 (7.27)$ & $85.17 (2.14)$ & $86.71 (2.12)$ & $86.15 (2.70)$ & $86.57 (1.68)$ & $87.01 (0.84)$ & $86.91 (1.29)$ & $86.20 (0.74)$ & $87.26 (0.74)$ & $75.53 (5.31)$ & $79.91 (4.14)$ & $70.02 (8.29)$ & $79.06 (5.08)$ \\
 & $s=-0.5$ & $83.64 (17.63)$ & $78.18 (7.27)$ & $85.45 (12.33)$ & $83.64 (6.80)$ & $85.03 (1.05)$ & $81.96 (2.27)$ & $86.43 (2.45)$ & $87.41 (1.71)$ & $86.72 (1.12)$ & $87.38 (0.88)$ & $86.70 (0.51)$ & $87.06 (0.69)$ & $79.11 (3.78)$ & $79.02 (3.11)$ & $79.73 (7.91)$ & $70.96 (7.98)$ \\
 & $s=0$ & $78.18 (7.27)$ & $78.18 (10.91)$ & $80.00 (6.80)$ & $72.73 (16.26)$ & $85.31 (2.12)$ & $86.43 (2.41)$ & $84.62 (2.42)$ & $84.76 (1.36)$ & $86.25 (0.43)$ & $87.15 (0.68)$ & $86.50 (0.97)$ & $86.53 (0.53)$ & $74.18 (6.69)$ & $82.19 (2.55)$ & $73.74 (9.78)$ & $79.82 (1.91)$ \\
 & $s=0.5$ & $85.45 (7.27)$ & $87.27 (4.45)$ & $69.09 (30.21)$ & $89.09 (10.60)$ & $86.29 (1.05)$ & $84.34 (1.80)$ & $85.31 (2.69)$ & $85.31 (2.50)$ & $86.52 (0.21)$ & $86.91 (0.59)$ & $87.29 (0.67)$ & $\mathbf{87.59 (0.72)}$ & $78.75 (5.32)$ & $\mathbf{84.52 (1.11)}$ & $75.26 (6.54)$ & $77.05 (7.76)$ \\
 & $s=1$ & $83.64 (12.06)$ & $92.73 (6.80)$ & $89.09 (6.80)$ & $89.09 (6.80)$ & $85.45 (2.23)$ & $86.57 (2.56)$ & $85.31 (1.53)$ & $\mathbf{87.83 (2.75)}$ & $86.44 (0.42)$ & $86.98 (0.86)$ & $86.98 (0.34)$ & $87.49 (0.36)$ & $77.27 (5.99)$ & $75.26 (1.10)$ & $76.02 (8.88)$ & $76.38 (4.04)$ \\
\midrule
MLP & None & \multicolumn{4}{c|}{$32.73 (12.33)$} & \multicolumn{4}{c|}{$77.90 (1.80)$} & \multicolumn{4}{c|}{$79.42 (0.73)$} & \multicolumn{4}{c}{$50.07 (2.55)$} \\
 & Adjacency & \multicolumn{4}{c|}{$\mathbf{80.00 (21.82)}$} & \multicolumn{4}{c|}{$\mathbf{83.22 (3.62)}$} & \multicolumn{4}{c|}{$80.17 (1.38)$} & \multicolumn{4}{c}{$49.57 (2.13)$} \\
 & $s=-1$ & $61.82 (18.54)$ & $63.64 (9.96)$ & $70.91 (10.60)$ & $63.64 (9.96)$ & $73.99 (2.09)$ & $74.55 (4.57)$ & $79.58 (2.56)$ & $78.60 (1.86)$ & $79.92 (0.95)$ & $79.69 (0.74)$ & $79.31 (0.85)$ & $79.62 (0.62)$ & $56.02 (5.00)$ & $54.77 (1.63)$ & $50.78 (3.22)$ & $55.97 (2.04)$ \\
 & $s=-0.5$ & $58.18 (18.72)$ & $69.09 (12.33)$ & $52.73 (6.80)$ & $29.09 (10.60)$ & $78.88 (1.03)$ & $76.92 (2.34)$ & $76.08 (4.73)$ & $81.54 (2.01)$ & $79.88 (0.80)$ & $80.10 (0.82)$ & $79.85 (0.82)$ & $79.70 (1.29)$ & $53.29 (2.67)$ & $55.17 (3.29)$ & $\mathbf{57.94 (3.49)}$ & $55.75 (3.95)$ \\
 & $s=0$ & $32.73 (7.27)$ & $25.45 (6.80)$ & $43.64 (14.55)$ & $29.09 (19.41)$ & $79.30 (3.79)$ & $78.88 (2.85)$ & $78.60 (2.10)$ & $81.54 (2.53)$ & $80.02 (0.56)$ & $79.85 (0.55)$ & $79.90 (0.83)$ & $80.13 (0.93)$ & $50.20 (0.94)$ & $49.26 (2.48)$ & $48.81 (2.40)$ & $47.61 (1.82)$ \\
 & $s=0.5$ & $41.82 (9.27)$ & $54.55 (5.75)$ & $67.27 (10.91)$ & $56.36 (21.82)$ & $78.60 (2.82)$ & $79.16 (5.87)$ & $80.70 (1.80)$ & $78.18 (1.12)$ & $\mathbf{80.31 (0.96)}$ & $80.31 (0.71)$ & $80.07 (0.40)$ & $79.87 (1.04)$ & $56.73 (1.29)$ & $57.72 (1.87)$ & $57.49 (2.68)$ & $55.93 (0.94)$ \\
 & $s=1$ & $49.09 (14.77)$ & $65.45 (10.60)$ & $61.82 (6.80)$ & $67.27 (12.33)$ & $77.48 (4.27)$ & $74.13 (4.00)$ & $78.46 (3.81)$ & $75.66 (2.63)$ & $79.83 (0.71)$ & $79.85 (0.91)$ & $80.25 (0.73)$ & $79.70 (0.51)$ & $56.60 (0.97)$ & $51.86 (3.17)$ & $57.18 (3.43)$ & $51.68 (0.88)$ \\
\midrule
GIN & None & \multicolumn{4}{c|}{$27.27 (11.50)$} & \multicolumn{4}{c|}{$71.75 (13.54)$} & \multicolumn{4}{c|}{$71.25 (17.65)$} & \multicolumn{4}{c}{$55.84 (3.85)$} \\
 & Adjacency & \multicolumn{4}{c|}{$49.09 (9.27)$} & \multicolumn{4}{c|}{$85.03 (2.60)$} & \multicolumn{4}{c|}{$88.07 (1.32)$} & \multicolumn{4}{c}{$69.62 (11.15)$} \\
 & $s=-1$ & $43.64 (31.18)$ & $67.27 (29.65)$ & $67.27 (15.85)$ & $52.73 (36.09)$ & $\mathbf{86.57 (4.04)}$ & $84.90 (1.44)$ & $83.92 (2.80)$ & $84.90 (2.24)$ & $88.73 (0.85)$ & $89.74 (0.66)$ & $\mathbf{89.97 (0.49)}$ & $88.91 (0.43)$ & $81.39 (4.72)$ & $74.41 (9.44)$ & $81.12 (1.25)$ & $81.30 (2.60)$ \\
 & $s=-0.5$ & $70.91 (13.36)$ & $54.55 (22.27)$ & $63.64 (18.18)$ & $69.09 (14.77)$ & $82.66 (3.52)$ & $85.87 (2.70)$ & $82.66 (2.78)$ & $83.64 (2.06)$ & $88.80 (1.29)$ & $89.19 (0.64)$ & $89.08 (0.25)$ & $88.40 (0.48)$ & $80.98 (2.13)$ & $76.02 (8.11)$ & $80.09 (5.21)$ & $82.51 (1.86)$ \\
 & $s=0$ & $52.73 (14.55)$ & $69.09 (12.33)$ & $40.00 (25.45)$ & $65.45 (14.55)$ & $84.34 (1.14)$ & $84.20 (2.75)$ & $82.38 (2.59)$ & $85.03 (2.24)$ & $87.97 (1.62)$ & $88.23 (1.11)$ & $89.09 (0.76)$ & $88.17 (1.02)$ & $79.91 (4.54)$ & $73.78 (6.47)$ & $81.12 (1.06)$ & $80.81 (0.67)$ \\
 & $s=0.5$ & $70.91 (10.60)$ & $60.00 (24.80)$ & $\mathbf{72.73 (29.88)}$ & $58.18 (23.43)$ & $83.36 (5.58)$ & $83.92 (2.21)$ & $85.31 (1.77)$ & $85.03 (1.69)$ & $89.26 (0.67)$ & $89.27 (0.52)$ & $88.73 (0.56)$ & $88.33 (0.61)$ & $81.25 (1.35)$ & $82.24 (2.06)$ & $\mathbf{82.86 (0.97)}$ & $79.42 (4.08)$ \\
 & $s=1$ & $49.09 (29.65)$ & $69.09 (7.27)$ & $58.18 (29.09)$ & $60.00 (25.45)$ & $86.15 (1.95)$ & $83.78 (0.93)$ & $85.17 (3.20)$ & $84.90 (1.05)$ & $88.71 (1.65)$ & $88.42 (0.86)$ & $88.99 (0.91)$ & $88.83 (0.52)$ & $76.64 (7.55)$ & $75.84 (7.50)$ & $73.83 (7.66)$ & $81.92 (2.25)$ \\
\midrule
GraphSAGE & None & \multicolumn{4}{c|}{$18.18 (5.75)$} & \multicolumn{4}{c|}{$68.39 (23.71)$} & \multicolumn{4}{c|}{$79.98 (0.91)$} & \multicolumn{4}{c}{$49.98 (5.23)$} \\
 & Adjacency & \multicolumn{4}{c|}{$72.73 (22.27)$} & \multicolumn{4}{c|}{$59.30 (29.94)$} & \multicolumn{4}{c|}{$80.48 (1.10)$} & \multicolumn{4}{c}{$\mathbf{84.56 (2.29)}$} \\
 & $s=-1$ & $78.18 (17.81)$ & $80.00 (10.60)$ & $89.09 (6.80)$ & $78.18 (18.72)$ & $80.70 (3.11)$ & $75.38 (6.37)$ & $77.90 (3.58)$ & $\mathbf{82.80 (2.68)}$ & $71.37 (13.99)$ & $\mathbf{80.74 (0.66)}$ & $76.68 (7.76)$ & $70.33 (14.01)$ & $62.28 (5.30)$ & $62.28 (2.88)$ & $60.40 (1.56)$ & $61.34 (4.79)$ \\
 & $s=-0.5$ & $80.00 (12.06)$ & $80.00 (10.60)$ & $83.64 (6.80)$ & $70.91 (21.82)$ & $78.88 (2.63)$ & $81.82 (1.47)$ & $79.72 (1.40)$ & $80.42 (2.12)$ & $79.32 (0.47)$ & $78.81 (2.47)$ & $77.84 (3.67)$ & $74.70 (11.67)$ & $60.54 (2.66)$ & $57.94 (0.91)$ & $62.24 (1.67)$ & $59.37 (3.03)$ \\
 & $s=0$ & $65.45 (10.60)$ & $70.91 (22.56)$ & $81.82 (28.17)$ & $76.36 (4.45)$ & $79.72 (2.38)$ & $80.56 (4.13)$ & $81.68 (2.59)$ & $80.28 (3.35)$ & $76.07 (9.23)$ & $79.67 (0.85)$ & $74.42 (10.38)$ & $73.47 (9.80)$ & $54.59 (6.69)$ & $58.21 (8.02)$ & $64.92 (4.47)$ & $53.87 (5.21)$ \\
 & $s=0.5$ & $78.18 (12.33)$ & $78.18 (12.33)$ & $81.82 (8.13)$ & $70.91 (13.36)$ & $79.30 (3.02)$ & $79.86 (1.62)$ & $82.80 (2.78)$ & $78.46 (3.89)$ & $75.89 (8.50)$ & $76.47 (8.32)$ & $73.83 (11.89)$ & $74.03 (13.10)$ & $59.69 (6.22)$ & $60.81 (7.65)$ & $59.28 (2.73)$ & $55.35 (4.74)$ \\
 & $s=1$ & $76.36 (4.45)$ & $\mathbf{96.36 (4.45)}$ & $81.82 (23.71)$ & $80.00 (15.64)$ & $80.42 (2.21)$ & $76.92 (2.54)$ & $77.90 (3.55)$ & $80.42 (3.22)$ & $79.97 (0.71)$ & $79.55 (0.26)$ & $76.53 (4.77)$ & $73.63 (11.48)$ & $64.03 (5.27)$ & $59.73 (3.86)$ & $60.45 (6.65)$ & $60.54 (6.54)$ \\
\bottomrule
\end{tabular}}
\end{table}

\begin{table}[ht]
\centering
\caption{Testing accuracy (\%) of different models augmented with spectral embedding variants under various feature corruption ratios on the Cornell dataset.} \label{table_cornell}
\resizebox{\textwidth}{!}{%
\begin{tabular}{ll|cccc|cccc|cccc|cccc|cccc}
\toprule
\textbf{Model} & \textbf{Variant} & \multicolumn{4}{c|}{\textbf{Corruption = 0\%}} & \multicolumn{4}{c|}{\textbf{Corruption = 10\%}} & \multicolumn{4}{c|}{\textbf{Corruption = 20\%}} & \multicolumn{4}{c|}{\textbf{Corruption = 50\%}} & \multicolumn{4}{c}{\textbf{Corruption = 90\%}} \\
\cmidrule(lr){3-6} \cmidrule(lr){7-10} \cmidrule(lr){11-14} \cmidrule(lr){15-18} \cmidrule(lr){19-22}
 &  & $t=-1$ & $t=-0.5$ & $t=0.5$ & $t=1$ & $t=-1$ & $t=-0.5$ & $t=0.5$ & $t=1$ & $t=-1$ & $t=-0.5$ & $t=0.5$ & $t=1$ & $t=-1$ & $t=-0.5$ & $t=0.5$ & $t=1$ & $t=-1$ & $t=-0.5$ & $t=0.5$ & $t=1$ \\
\midrule
GCN & None & \multicolumn{4}{c|}{$42.55 (5.59)$} & \multicolumn{4}{c|}{$41.45 (8.24)$} & \multicolumn{4}{c|}{$44.00 (8.56)$} & \multicolumn{4}{c|}{$30.18 (8.34)$} & \multicolumn{4}{c}{$34.18 (9.51)$} \\
 & Adjacency & \multicolumn{4}{c|}{$44.73 (3.37)$} & \multicolumn{4}{c|}{$46.91 (3.88)$} & \multicolumn{4}{c|}{$40.36 (6.44)$} & \multicolumn{4}{c|}{$\mathbf{44.36 (7.68)}$} & \multicolumn{4}{c}{$32.36 (4.66)$} \\
 & $s=-1$ & $41.45 (5.91)$ & $42.91 (7.51)$ & $41.09 (6.15)$ & $\mathbf{48.36 (5.47)}$ & $42.55 (4.08)$ & $41.09 (9.31)$ & $38.18 (5.27)$ & $40.73 (4.96)$ & $41.45 (8.40)$ & $\mathbf{46.18 (2.18)}$ & $44.36 (0.89)$ & $41.82 (6.30)$ & $38.18 (5.98)$ & $36.73 (2.41)$ & $35.64 (6.26)$ & $40.36 (7.03)$ & $34.91 (7.03)$ & $31.64 (3.92)$ & $34.18 (4.36)$ & $32.73 (1.99)$ \\
 & $s=-0.5$ & $42.18 (6.65)$ & $42.18 (3.13)$ & $46.91 (3.71)$ & $45.45 (3.25)$ & $45.45 (4.45)$ & $37.09 (9.10)$ & $47.27 (7.18)$ & $45.45 (2.82)$ & $39.27 (3.56)$ & $42.18 (1.78)$ & $41.82 (4.88)$ & $43.27 (6.84)$ & $37.82 (6.24)$ & $36.00 (5.06)$ & $37.82 (5.56)$ & $37.82 (5.32)$ & $35.27 (5.22)$ & $35.64 (8.50)$ & $31.27 (5.80)$ & $\mathbf{39.64 (3.13)}$ \\
 & $s=0$ & $41.09 (4.82)$ & $39.64 (3.71)$ & $41.45 (3.13)$ & $45.82 (4.36)$ & $38.18 (9.96)$ & $39.64 (6.13)$ & $42.91 (10.06)$ & $45.09 (2.41)$ & $44.36 (6.46)$ & $41.82 (9.27)$ & $38.91 (5.22)$ & $43.27 (4.66)$ & $36.00 (6.94)$ & $42.18 (4.21)$ & $40.73 (6.76)$ & $38.55 (6.55)$ & $32.73 (5.14)$ & $37.45 (5.70)$ & $35.27 (7.85)$ & $35.27 (5.47)$ \\
 & $s=0.5$ & $43.64 (6.19)$ & $46.55 (1.85)$ & $42.55 (11.13)$ & $44.73 (6.67)$ & $41.45 (2.41)$ & $42.18 (3.13)$ & $40.73 (5.22)$ & $44.73 (6.57)$ & $37.45 (8.02)$ & $39.27 (8.95)$ & $43.64 (5.01)$ & $43.64 (7.09)$ & $37.82 (6.24)$ & $36.73 (4.36)$ & $36.36 (5.01)$ & $38.91 (6.26)$ & $33.45 (6.86)$ & $37.09 (4.82)$ & $34.18 (4.36)$ & $37.82 (8.24)$ \\
 & $s=1$ & $40.73 (4.24)$ & $47.27 (8.68)$ & $42.55 (4.82)$ & $46.91 (9.16)$ & $\mathbf{49.09 (5.39)}$ & $41.82 (3.64)$ & $45.45 (6.40)$ & $41.45 (2.12)$ & $40.73 (4.08)$ & $43.27 (2.67)$ & $43.64 (3.81)$ & $40.00 (3.04)$ & $35.64 (4.39)$ & $38.18 (3.45)$ & $36.73 (4.80)$ & $37.45 (5.34)$ & $36.36 (7.54)$ & $32.36 (7.31)$ & $31.27 (5.56)$ & $29.09 (6.08)$ \\
\midrule
MLP & None & \multicolumn{4}{c|}{$65.45 (7.54)$} & \multicolumn{4}{c|}{$61.09 (5.59)$} & \multicolumn{4}{c|}{$\mathbf{64.36 (7.85)}$} & \multicolumn{4}{c|}{$31.64 (9.80)$} & \multicolumn{4}{c}{$29.09 (11.33)$} \\
 & Adjacency & \multicolumn{4}{c|}{$68.36 (5.34)$} & \multicolumn{4}{c|}{$58.55 (8.56)$} & \multicolumn{4}{c|}{$54.18 (7.92)$} & \multicolumn{4}{c|}{$35.64 (8.10)$} & \multicolumn{4}{c}{$22.18 (4.36)$} \\
 & $s=-1$ & $73.09 (5.06)$ & $66.91 (9.44)$ & $62.18 (5.19)$ & $61.82 (10.48)$ & $\mathbf{67.27 (2.57)}$ & $59.27 (9.10)$ & $57.45 (4.24)$ & $64.36 (6.15)$ & $53.45 (7.14)$ & $54.55 (2.57)$ & $57.82 (6.65)$ & $46.91 (3.71)$ & $46.91 (4.51)$ & $37.45 (6.36)$ & $38.18 (8.37)$ & $37.82 (6.24)$ & $25.45 (11.78)$ & $26.18 (7.85)$ & $25.45 (6.80)$ & $23.27 (6.55)$ \\
 & $s=-0.5$ & $66.91 (11.11)$ & $68.36 (6.04)$ & $67.64 (5.68)$ & $67.64 (4.51)$ & $65.82 (4.51)$ & $65.45 (6.40)$ & $64.73 (5.70)$ & $61.45 (10.12)$ & $54.55 (5.39)$ & $57.09 (5.22)$ & $57.82 (4.51)$ & $57.82 (9.65)$ & $39.64 (6.65)$ & $35.64 (1.85)$ & $36.73 (5.56)$ & $40.00 (9.96)$ & $25.82 (7.40)$ & $25.45 (6.40)$ & $25.45 (4.45)$ & $26.91 (7.49)$ \\
 & $s=0$ & $67.64 (4.36)$ & $66.91 (5.91)$ & $71.64 (7.24)$ & $64.36 (7.42)$ & $64.00 (5.56)$ & $64.36 (4.96)$ & $66.18 (6.96)$ & $62.91 (5.70)$ & $53.09 (6.84)$ & $56.36 (11.95)$ & $61.09 (6.36)$ & $56.36 (12.22)$ & $37.82 (8.56)$ & $38.18 (8.53)$ & $34.18 (6.34)$ & $37.45 (3.92)$ & $29.45 (5.32)$ & $20.00 (1.99)$ & $21.09 (7.51)$ & $\mathbf{31.27 (8.08)}$ \\
 & $s=0.5$ & $68.73 (4.51)$ & $68.73 (5.56)$ & $68.73 (6.13)$ & $\mathbf{74.55 (10.22)}$ & $61.45 (2.41)$ & $58.55 (5.80)$ & $61.82 (10.22)$ & $62.55 (7.05)$ & $54.18 (10.31)$ & $54.18 (4.66)$ & $53.45 (9.24)$ & $53.09 (11.23)$ & $41.45 (7.49)$ & $40.36 (6.13)$ & $\mathbf{50.18 (8.10)}$ & $42.55 (8.80)$ & $25.82 (6.44)$ & $20.00 (5.98)$ & $17.09 (7.59)$ & $25.82 (5.91)$ \\
 & $s=1$ & $65.45 (6.19)$ & $66.18 (4.69)$ & $66.91 (4.21)$ & $71.64 (3.56)$ & $64.36 (9.87)$ & $59.64 (9.01)$ & $65.09 (6.02)$ & $62.91 (7.85)$ & $60.36 (5.32)$ & $57.09 (7.14)$ & $61.09 (3.74)$ & $62.55 (5.82)$ & $37.82 (9.58)$ & $38.18 (7.54)$ & $44.00 (7.49)$ & $44.36 (8.50)$ & $22.18 (7.83)$ & $16.00 (4.05)$ & $22.91 (2.72)$ & $22.55 (4.24)$ \\
\midrule
GIN & None & \multicolumn{4}{c|}{$49.09 (2.57)$} & \multicolumn{4}{c|}{$49.09 (4.15)$} & \multicolumn{4}{c|}{$50.91 (3.45)$} & \multicolumn{4}{c|}{$46.55 (4.08)$} & \multicolumn{4}{c}{$49.09 (3.81)$} \\
 & Adjacency & \multicolumn{4}{c|}{$49.09 (4.15)$} & \multicolumn{4}{c|}{$50.91 (6.19)$} & \multicolumn{4}{c|}{$50.18 (4.24)$} & \multicolumn{4}{c|}{$49.45 (8.24)$} & \multicolumn{4}{c}{$52.00 (2.47)$} \\
 & $s=-1$ & $52.00 (5.34)$ & $51.27 (1.36)$ & $51.64 (4.69)$ & $43.64 (3.04)$ & $49.82 (7.14)$ & $48.00 (5.47)$ & $51.64 (2.47)$ & $49.09 (4.74)$ & $46.91 (6.13)$ & $53.82 (4.54)$ & $46.91 (7.58)$ & $46.91 (4.93)$ & $\mathbf{54.55 (6.19)}$ & $47.27 (4.74)$ & $47.64 (5.80)$ & $52.73 (2.30)$ & $49.09 (4.74)$ & $50.55 (4.93)$ & $48.00 (6.15)$ & $49.09 (9.76)$ \\
 & $s=-0.5$ & $52.36 (6.02)$ & $50.91 (4.15)$ & $46.55 (7.51)$ & $49.45 (2.12)$ & $48.36 (4.69)$ & $52.73 (3.98)$ & $46.91 (2.12)$ & $48.36 (5.22)$ & $49.45 (3.88)$ & $46.91 (5.06)$ & $53.45 (8.50)$ & $50.18 (5.59)$ & $46.18 (2.18)$ & $50.55 (3.88)$ & $48.36 (10.58)$ & $46.18 (3.17)$ & $52.36 (4.66)$ & $47.27 (2.57)$ & $46.91 (5.32)$ & $51.64 (2.95)$ \\
 & $s=0$ & $48.36 (8.10)$ & $46.91 (6.02)$ & $45.82 (5.06)$ & $50.91 (5.51)$ & $50.91 (5.75)$ & $51.27 (4.66)$ & $46.18 (4.54)$ & $45.45 (4.45)$ & $45.82 (8.48)$ & $45.82 (6.02)$ & $49.09 (2.30)$ & $52.36 (3.71)$ & $49.82 (6.36)$ & $52.36 (4.51)$ & $53.45 (5.47)$ & $49.82 (4.82)$ & $48.00 (5.47)$ & $\mathbf{56.00 (7.22)}$ & $52.00 (10.00)$ & $50.55 (8.32)$ \\
 & $s=0.5$ & $52.00 (5.70)$ & $54.18 (5.56)$ & $54.18 (3.13)$ & $53.09 (7.03)$ & $50.18 (2.95)$ & $\mathbf{52.73 (4.74)}$ & $48.73 (5.06)$ & $48.36 (2.72)$ & $52.73 (3.25)$ & $54.55 (4.60)$ & $52.36 (7.22)$ & $50.55 (5.32)$ & $50.18 (4.39)$ & $50.18 (5.47)$ & $53.09 (5.06)$ & $53.09 (6.94)$ & $49.45 (2.12)$ & $50.18 (6.67)$ & $46.91 (6.65)$ & $53.82 (4.24)$ \\
 & $s=1$ & $53.45 (1.85)$ & $\mathbf{54.55 (2.82)}$ & $53.82 (6.26)$ & $50.55 (4.80)$ & $47.64 (4.21)$ & $45.09 (4.05)$ & $47.27 (5.39)$ & $51.64 (4.54)$ & $\mathbf{56.00 (3.33)}$ & $55.64 (4.39)$ & $54.18 (5.06)$ & $48.36 (5.70)$ & $50.91 (1.63)$ & $49.45 (6.02)$ & $49.45 (5.32)$ & $50.18 (4.82)$ & $51.27 (4.66)$ & $49.09 (3.98)$ & $50.18 (6.86)$ & $48.36 (2.95)$ \\
\midrule
GraphSAGE & None & \multicolumn{4}{c|}{$53.82 (4.24)$} & \multicolumn{4}{c|}{$\mathbf{58.55 (7.13)}$} & \multicolumn{4}{c|}{$49.09 (4.74)$} & \multicolumn{4}{c|}{$53.09 (2.41)$} & \multicolumn{4}{c}{$51.64 (4.39)$} \\
 & Adjacency & \multicolumn{4}{c|}{$57.09 (5.82)$} & \multicolumn{4}{c|}{$58.18 (5.63)$} & \multicolumn{4}{c|}{$54.18 (4.21)$} & \multicolumn{4}{c|}{$53.45 (6.15)$} & \multicolumn{4}{c}{$52.73 (3.45)$} \\
 & $s=-1$ & $54.91 (2.41)$ & $61.82 (4.74)$ & $56.73 (3.13)$ & $55.64 (2.72)$ & $56.36 (3.25)$ & $56.73 (4.51)$ & $50.55 (5.56)$ & $54.18 (4.80)$ & $56.36 (6.08)$ & $52.36 (9.99)$ & $51.27 (4.51)$ & $54.18 (5.56)$ & $51.64 (2.95)$ & $52.00 (4.69)$ & $49.82 (5.22)$ & $48.73 (6.94)$ & $47.27 (5.01)$ & $\mathbf{57.09 (6.76)}$ & $52.00 (1.85)$ & $46.91 (2.12)$ \\
 & $s=-0.5$ & $58.18 (6.61)$ & $59.27 (3.17)$ & $54.91 (4.21)$ & $57.45 (5.59)$ & $49.09 (3.81)$ & $53.82 (8.95)$ & $53.45 (6.57)$ & $54.18 (3.33)$ & $50.55 (4.05)$ & $56.00 (2.67)$ & $56.36 (2.57)$ & $54.18 (7.13)$ & $52.36 (9.99)$ & $46.55 (9.24)$ & $47.27 (9.48)$ & $\mathbf{56.36 (4.88)}$ & $50.55 (5.56)$ & $52.36 (11.17)$ & $44.36 (5.47)$ & $48.73 (5.68)$ \\
 & $s=0$ & $\mathbf{63.64 (4.15)}$ & $53.82 (4.69)$ & $58.18 (7.09)$ & $57.09 (2.47)$ & $56.00 (3.13)$ & $54.18 (4.80)$ & $58.18 (6.50)$ & $53.82 (1.85)$ & $50.55 (4.51)$ & $53.45 (5.70)$ & $57.09 (6.04)$ & $53.45 (7.42)$ & $52.36 (6.84)$ & $54.55 (5.51)$ & $55.27 (3.74)$ & $53.45 (4.08)$ & $49.09 (3.25)$ & $49.82 (4.69)$ & $56.36 (8.13)$ & $54.91 (4.21)$ \\
 & $s=0.5$ & $55.27 (9.93)$ & $53.09 (9.58)$ & $56.00 (4.51)$ & $56.73 (3.13)$ & $53.45 (7.05)$ & $52.36 (6.34)$ & $53.09 (8.56)$ & $52.73 (8.37)$ & $54.55 (6.40)$ & $56.00 (4.05)$ & $52.00 (1.85)$ & $54.91 (6.24)$ & $55.27 (7.14)$ & $51.64 (7.51)$ & $54.18 (2.41)$ & $53.82 (4.69)$ & $50.55 (7.75)$ & $54.55 (6.30)$ & $53.82 (3.92)$ & $53.45 (9.45)$ \\
 & $s=1$ & $57.45 (3.56)$ & $55.64 (3.74)$ & $60.36 (7.58)$ & $50.91 (6.61)$ & $54.55 (7.27)$ & $55.27 (6.57)$ & $49.45 (7.40)$ & $52.00 (4.96)$ & $\mathbf{57.82 (9.58)}$ & $51.27 (3.13)$ & $48.00 (7.77)$ & $54.18 (7.22)$ & $46.18 (10.00)$ & $51.27 (3.71)$ & $48.00 (3.56)$ & $52.73 (4.74)$ & $53.09 (6.24)$ & $51.27 (4.51)$ & $50.91 (5.98)$ & $50.55 (6.02)$ \\
\bottomrule
\end{tabular}}
\end{table}

\begin{table}[ht]
\centering
\caption{Testing accuracy (\%) of different models augmented with spectral embedding variants under various feature corruption ratios on the Wisconsin dataset. } \label{table_wisconsin}
\resizebox{\textwidth}{!}{%
\begin{tabular}{ll|cccc|cccc|cccc|cccc|cccc}
\toprule
\textbf{Model} & \textbf{Variant} & \multicolumn{4}{c|}{\textbf{Corruption = 0\%}} & \multicolumn{4}{c|}{\textbf{Corruption = 10\%}} & \multicolumn{4}{c|}{\textbf{Corruption = 20\%}} & \multicolumn{4}{c|}{\textbf{Corruption = 50\%}} & \multicolumn{4}{c}{\textbf{Corruption = 90\%}} \\
\cmidrule(lr){3-6} \cmidrule(lr){7-10} \cmidrule(lr){11-14} \cmidrule(lr){15-18} \cmidrule(lr){19-22}
 &  & $t=-1$ & $t=-0.5$ & $t=0.5$ & $t=1$ & $t=-1$ & $t=-0.5$ & $t=0.5$ & $t=1$ & $t=-1$ & $t=-0.5$ & $t=0.5$ & $t=1$ & $t=-1$ & $t=-0.5$ & $t=0.5$ & $t=1$ & $t=-1$ & $t=-0.5$ & $t=0.5$ & $t=1$ \\
\midrule
GCN & None & \multicolumn{4}{c|}{$41.05 (7.51)$} & \multicolumn{4}{c|}{$39.47 (5.33)$} & \multicolumn{4}{c|}{$40.79 (4.08)$} & \multicolumn{4}{c|}{$41.05 (4.19)$} & \multicolumn{4}{c}{$45.79 (4.59)$} \\
 & Adjacency & \multicolumn{4}{c|}{$40.26 (5.30)$} & \multicolumn{4}{c|}{$40.79 (7.81)$} & \multicolumn{4}{c|}{$45.79 (5.22)$} & \multicolumn{4}{c|}{$42.11 (6.81)$} & \multicolumn{4}{c}{$43.42 (4.40)$} \\
 & $s=-1$ & $42.11 (4.56)$ & $41.05 (4.03)$ & $41.05 (6.83)$ & $45.53 (4.21)$ & $41.58 (2.29)$ & $43.16 (5.09)$ & $42.63 (1.58)$ & $42.63 (5.43)$ & $40.79 (3.43)$ & $42.11 (4.78)$ & $41.58 (4.45)$ & $\mathbf{46.32 (6.14)}$ & $40.00 (7.14)$ & $45.53 (4.29)$ & $42.63 (3.78)$ & $42.11 (7.67)$ & $40.00 (6.04)$ & $43.42 (5.52)$ & $\mathbf{47.37 (4.16)}$ & $42.63 (1.78)$ \\
 & $s=-0.5$ & $42.37 (1.75)$ & $42.89 (8.39)$ & $40.79 (5.58)$ & $\mathbf{46.84 (5.49)}$ & $41.84 (2.55)$ & $41.32 (5.80)$ & $\mathbf{43.95 (3.39)}$ & $38.16 (5.83)$ & $42.89 (1.97)$ & $40.00 (3.68)$ & $38.68 (6.89)$ & $39.21 (3.47)$ & $40.53 (9.36)$ & $42.11 (3.00)$ & $41.32 (5.03)$ & $42.89 (2.29)$ & $42.63 (5.80)$ & $41.05 (2.93)$ & $40.26 (7.61)$ & $41.05 (3.27)$ \\
 & $s=0$ & $44.74 (5.71)$ & $39.47 (5.71)$ & $45.00 (4.36)$ & $40.79 (3.53)$ & $42.89 (3.29)$ & $41.84 (4.19)$ & $40.26 (4.37)$ & $37.89 (3.27)$ & $42.89 (4.90)$ & $42.37 (4.11)$ & $43.16 (4.28)$ & $44.47 (2.41)$ & $40.53 (5.61)$ & $39.74 (2.26)$ & $41.84 (6.14)$ & $43.95 (3.39)$ & $41.58 (10.01)$ & $42.63 (3.18)$ & $43.95 (2.95)$ & $41.58 (3.29)$ \\
 & $s=0.5$ & $45.79 (4.95)$ & $43.68 (1.53)$ & $42.89 (4.53)$ & $41.05 (3.85)$ & $42.63 (3.96)$ & $40.26 (4.60)$ & $42.37 (4.95)$ & $41.84 (2.81)$ & $40.00 (5.10)$ & $42.11 (6.06)$ & $41.05 (2.26)$ & $41.58 (3.39)$ & $39.21 (7.55)$ & $44.47 (1.53)$ & $43.42 (2.35)$ & $42.63 (2.95)$ & $36.32 (8.83)$ & $43.42 (5.77)$ & $41.58 (2.14)$ & $41.32 (3.18)$ \\
 & $s=1$ & $44.47 (5.22)$ & $37.37 (2.14)$ & $38.16 (2.63)$ & $42.63 (4.53)$ & $43.16 (2.68)$ & $40.79 (3.00)$ & $42.37 (1.29)$ & $39.47 (5.83)$ & $40.79 (3.33)$ & $44.74 (1.44)$ & $40.79 (4.78)$ & $43.68 (5.67)$ & $38.42 (7.46)$ & $40.79 (2.35)$ & $40.53 (2.26)$ & $\mathbf{47.11 (4.43)}$ & $41.05 (2.81)$ & $43.42 (4.63)$ & $38.16 (7.76)$ & $42.63 (2.44)$ \\
\midrule
MLP & None & \multicolumn{4}{c|}{$76.84 (5.86)$} & \multicolumn{4}{c|}{$70.00 (10.34)$} & \multicolumn{4}{c|}{$64.74 (4.51)$} & \multicolumn{4}{c|}{$40.26 (6.79)$} & \multicolumn{4}{c}{$14.21 (2.93)$} \\
 & Adjacency & \multicolumn{4}{c|}{$79.21 (3.57)$} & \multicolumn{4}{c|}{$74.21 (7.42)$} & \multicolumn{4}{c|}{$61.58 (6.19)$} & \multicolumn{4}{c|}{$47.63 (8.09)$} & \multicolumn{4}{c}{$29.21 (11.08)$} \\
 & $s=-1$ & $76.05 (6.19)$ & $78.16 (7.09)$ & $76.32 (5.46)$ & $81.32 (6.14)$ & $72.11 (8.50)$ & $71.84 (5.80)$ & $74.74 (4.66)$ & $71.05 (10.26)$ & $64.21 (5.22)$ & $64.21 (7.91)$ & $67.11 (3.81)$ & $63.16 (5.46)$ & $41.58 (6.21)$ & $44.47 (4.28)$ & $41.58 (3.59)$ & $45.26 (11.12)$ & $25.53 (9.62)$ & $\mathbf{36.84 (9.30)}$ & $16.32 (2.83)$ & $20.79 (5.97)$ \\
 & $s=-0.5$ & $80.79 (5.17)$ & $82.11 (3.59)$ & $80.53 (4.43)$ & $76.32 (6.34)$ & $71.58 (4.29)$ & $70.53 (6.21)$ & $\mathbf{77.63 (6.06)}$ & $73.42 (6.30)$ & $67.89 (7.70)$ & $63.16 (6.71)$ & $68.16 (7.60)$ & $\mathbf{69.74 (4.48)}$ & $44.21 (5.17)$ & $41.05 (2.55)$ & $48.16 (3.78)$ & $48.42 (10.70)$ & $20.53 (8.67)$ & $25.53 (4.29)$ & $25.79 (9.29)$ & $25.00 (10.26)$ \\
 & $s=0$ & $80.00 (4.66)$ & $77.63 (3.72)$ & $82.63 (4.66)$ & $78.42 (5.37)$ & $73.95 (5.42)$ & $68.95 (8.59)$ & $69.74 (4.78)$ & $72.89 (3.68)$ & $65.00 (4.90)$ & $68.95 (2.95)$ & $64.74 (4.88)$ & $64.74 (8.66)$ & $46.84 (7.74)$ & $46.58 (5.62)$ & $37.11 (4.11)$ & $46.05 (2.63)$ & $23.16 (10.48)$ & $23.42 (5.22)$ & $22.89 (10.38)$ & $21.58 (6.21)$ \\
 & $s=0.5$ & $78.42 (2.44)$ & $82.37 (3.96)$ & $\mathbf{83.16 (5.29)}$ & $81.05 (6.84)$ & $71.05 (4.48)$ & $74.74 (6.88)$ & $68.95 (4.75)$ & $68.95 (5.68)$ & $60.26 (6.98)$ & $65.79 (7.40)$ & $69.21 (6.48)$ & $65.00 (3.39)$ & $47.63 (7.42)$ & $44.21 (3.39)$ & $45.79 (6.36)$ & $45.53 (5.56)$ & $29.47 (1.78)$ & $26.32 (9.08)$ & $25.00 (14.44)$ & $17.37 (6.83)$ \\
 & $s=1$ & $81.05 (3.18)$ & $80.53 (3.76)$ & $77.63 (7.30)$ & $78.95 (7.98)$ & $71.58 (2.83)$ & $71.32 (4.95)$ & $72.37 (7.30)$ & $73.95 (6.41)$ & $66.84 (6.68)$ & $61.58 (4.66)$ & $68.16 (6.63)$ & $64.47 (6.00)$ & $49.47 (7.47)$ & $46.58 (4.90)$ & $48.16 (4.75)$ & $\mathbf{50.00 (4.24)}$ & $20.53 (8.83)$ & $25.26 (8.70)$ & $30.79 (5.30)$ & $22.37 (4.78)$ \\
\midrule
GIN & None & \multicolumn{4}{c|}{$45.53 (5.68)$} & \multicolumn{4}{c|}{$43.68 (2.11)$} & \multicolumn{4}{c|}{$43.42 (3.99)$} & \multicolumn{4}{c|}{$46.05 (2.20)$} & \multicolumn{4}{c}{$45.79 (2.55)$} \\
 & Adjacency & \multicolumn{4}{c|}{$48.42 (2.41)$} & \multicolumn{4}{c|}{$46.84 (6.04)$} & \multicolumn{4}{c|}{$46.58 (4.75)$} & \multicolumn{4}{c|}{$47.11 (4.59)$} & \multicolumn{4}{c}{$42.63 (2.58)$} \\
 & $s=-1$ & $50.26 (2.68)$ & $50.00 (4.85)$ & $\mathbf{51.32 (7.49)}$ & $48.16 (4.82)$ & $48.95 (4.88)$ & $47.63 (3.76)$ & $45.53 (6.53)$ & $42.11 (4.48)$ & $45.53 (3.18)$ & $48.95 (3.57)$ & $46.32 (3.67)$ & $47.37 (5.46)$ & $45.00 (4.81)$ & $\mathbf{51.32 (2.76)}$ & $50.53 (2.58)$ & $48.42 (4.51)$ & $47.11 (6.19)$ & $48.95 (3.37)$ & $48.95 (4.51)$ & $\mathbf{50.53 (3.68)}$ \\
 & $s=-0.5$ & $50.53 (4.53)$ & $47.63 (5.61)$ & $46.32 (4.19)$ & $42.89 (3.07)$ & $42.63 (3.96)$ & $47.11 (4.03)$ & $\mathbf{50.00 (3.22)}$ & $46.32 (4.43)$ & $47.37 (5.58)$ & $47.37 (4.16)$ & $47.63 (3.57)$ & $46.32 (2.68)$ & $47.11 (4.74)$ & $46.58 (4.04)$ & $50.79 (3.78)$ & $43.42 (5.26)$ & $48.42 (4.03)$ & $45.53 (8.83)$ & $47.37 (3.00)$ & $47.63 (2.11)$ \\
 & $s=0$ & $43.95 (4.21)$ & $45.53 (3.39)$ & $42.37 (3.16)$ & $46.05 (5.88)$ & $49.21 (4.75)$ & $46.84 (8.35)$ & $45.00 (7.32)$ & $47.11 (3.16)$ & $50.79 (3.39)$ & $46.05 (3.72)$ & $49.21 (6.53)$ & $49.47 (4.21)$ & $46.05 (2.35)$ & $45.53 (2.71)$ & $45.26 (7.52)$ & $47.37 (5.39)$ & $48.42 (5.42)$ & $48.42 (2.11)$ & $47.11 (5.02)$ & $48.16 (3.96)$ \\
 & $s=0.5$ & $46.58 (7.79)$ & $47.63 (3.67)$ & $46.32 (3.85)$ & $50.00 (6.00)$ & $45.79 (2.81)$ & $44.47 (3.94)$ & $46.32 (5.35)$ & $44.74 (6.55)$ & $45.53 (6.21)$ & $48.16 (1.78)$ & $\mathbf{52.11 (4.60)}$ & $48.42 (4.88)$ & $48.42 (6.83)$ & $43.95 (2.71)$ & $43.68 (3.16)$ & $46.58 (1.78)$ & $44.47 (4.19)$ & $45.00 (4.66)$ & $46.32 (6.63)$ & $46.32 (5.09)$ \\
 & $s=1$ & $47.63 (3.37)$ & $46.84 (2.29)$ & $46.05 (5.71)$ & $46.84 (4.13)$ & $47.63 (4.88)$ & $43.95 (4.53)$ & $44.47 (4.03)$ & $46.05 (3.33)$ & $51.05 (5.02)$ & $48.42 (7.65)$ & $46.05 (6.71)$ & $46.05 (3.72)$ & $43.68 (4.03)$ & $46.84 (8.39)$ & $45.26 (3.78)$ & $46.32 (4.36)$ & $45.26 (4.13)$ & $43.68 (3.85)$ & $44.47 (4.66)$ & $45.53 (8.09)$ \\
\midrule
GraphSAGE & None & \multicolumn{4}{c|}{$66.84 (4.03)$} & \multicolumn{4}{c|}{$66.84 (6.73)$} & \multicolumn{4}{c|}{$59.47 (3.27)$} & \multicolumn{4}{c|}{$56.58 (4.63)$} & \multicolumn{4}{c}{$51.84 (9.06)$} \\
 & Adjacency & \multicolumn{4}{c|}{$70.53 (5.49)$} & \multicolumn{4}{c|}{$67.11 (3.99)$} & \multicolumn{4}{c|}{$60.79 (1.53)$} & \multicolumn{4}{c|}{$53.95 (4.40)$} & \multicolumn{4}{c}{$45.26 (5.24)$} \\
 & $s=-1$ & $69.21 (5.30)$ & $66.84 (7.08)$ & $65.00 (4.75)$ & $68.42 (4.63)$ & $62.63 (4.13)$ & $58.68 (6.04)$ & $61.58 (4.59)$ & $62.37 (3.68)$ & $60.79 (2.68)$ & $61.05 (4.82)$ & $61.58 (5.22)$ & $62.89 (1.93)$ & $55.00 (2.81)$ & $52.89 (3.47)$ & $55.53 (7.60)$ & $58.95 (4.81)$ & $51.84 (4.29)$ & $52.63 (3.99)$ & $51.32 (7.85)$ & $\mathbf{56.05 (1.78)}$ \\
 & $s=-0.5$ & $66.05 (5.22)$ & $70.79 (3.05)$ & $71.05 (4.48)$ & $64.74 (5.09)$ & $64.47 (3.53)$ & $63.95 (4.97)$ & $65.00 (5.86)$ & $66.58 (3.78)$ & $61.32 (2.14)$ & $58.95 (3.05)$ & $61.84 (6.55)$ & $61.84 (3.22)$ & $57.11 (6.53)$ & $57.63 (5.91)$ & $58.68 (3.39)$ & $\mathbf{62.37 (3.96)}$ & $52.37 (5.67)$ & $54.21 (2.93)$ & $52.37 (8.50)$ & $53.95 (2.50)$ \\
 & $s=0$ & $66.05 (3.05)$ & $65.26 (4.53)$ & $65.79 (4.16)$ & $70.26 (4.97)$ & $63.42 (3.94)$ & $66.58 (6.99)$ & $58.68 (4.29)$ & $65.00 (3.07)$ & $65.53 (4.95)$ & $61.32 (4.29)$ & $55.79 (1.78)$ & $64.47 (5.06)$ & $51.32 (5.58)$ & $54.47 (3.59)$ & $58.16 (8.66)$ & $60.26 (2.81)$ & $53.42 (6.15)$ & $50.26 (7.18)$ & $53.42 (4.29)$ & $46.84 (4.97)$ \\
 & $s=0.5$ & $70.00 (6.08)$ & $66.84 (4.74)$ & $70.79 (3.27)$ & $67.37 (6.41)$ & $65.26 (6.26)$ & $\mathbf{68.68 (4.95)}$ & $64.74 (3.67)$ & $65.53 (3.05)$ & $58.42 (3.39)$ & $55.79 (8.67)$ & $64.47 (4.71)$ & $\mathbf{66.84 (7.55)}$ & $52.37 (4.28)$ & $57.63 (5.55)$ & $56.84 (5.61)$ & $56.84 (4.81)$ & $47.89 (6.58)$ & $53.42 (3.59)$ & $53.68 (5.02)$ & $51.84 (4.53)$ \\
 & $s=1$ & $67.89 (4.45)$ & $66.58 (5.68)$ & $64.21 (4.66)$ & $\mathbf{73.95 (3.94)}$ & $66.58 (2.58)$ & $65.00 (5.30)$ & $60.26 (6.98)$ & $63.42 (3.85)$ & $58.95 (5.16)$ & $60.26 (4.74)$ & $64.21 (6.19)$ & $63.42 (6.98)$ & $58.68 (7.92)$ & $55.53 (5.42)$ & $61.32 (5.62)$ & $57.63 (2.41)$ & $48.95 (8.38)$ & $49.47 (4.75)$ & $51.05 (3.85)$ & $52.11 (2.71)$ \\
\bottomrule
\end{tabular}}
\end{table}

\begin{table}[ht]
\centering
\caption{Testing accuracy (\%) of different models augmented with spectral embedding variants under various feature corruption ratios on the Texas dataset. } \label{table_texas}
\resizebox{\textwidth}{!}{%
\begin{tabular}{ll|cccc|cccc|cccc|cccc|cccc}
\toprule
\textbf{Model} & \textbf{Variant} & \multicolumn{4}{c|}{\textbf{Corruption = 0\%}} & \multicolumn{4}{c|}{\textbf{Corruption = 10\%}} & \multicolumn{4}{c|}{\textbf{Corruption = 20\%}} & \multicolumn{4}{c|}{\textbf{Corruption = 50\%}} & \multicolumn{4}{c}{\textbf{Corruption = 90\%}} \\
\cmidrule(lr){3-6} \cmidrule(lr){7-10} \cmidrule(lr){11-14} \cmidrule(lr){15-18} \cmidrule(lr){19-22}
 &  & $t=-1$ & $t=-0.5$ & $t=0.5$ & $t=1$ & $t=-1$ & $t=-0.5$ & $t=0.5$ & $t=1$ & $t=-1$ & $t=-0.5$ & $t=0.5$ & $t=1$ & $t=-1$ & $t=-0.5$ & $t=0.5$ & $t=1$ & $t=-1$ & $t=-0.5$ & $t=0.5$ & $t=1$ \\
\midrule
GCN & None & \multicolumn{4}{c|}{$43.64 (7.54)$} & \multicolumn{4}{c|}{$45.45 (2.57)$} & \multicolumn{4}{c|}{$44.73 (2.47)$} & \multicolumn{4}{c|}{$48.36 (5.34)$} & \multicolumn{4}{c}{$48.73 (4.05)$} \\
 & Adjacency & \multicolumn{4}{c|}{$42.18 (6.65)$} & \multicolumn{4}{c|}{$48.73 (4.21)$} & \multicolumn{4}{c|}{$45.82 (6.34)$} & \multicolumn{4}{c|}{$44.73 (4.08)$} & \multicolumn{4}{c}{$38.91 (5.22)$} \\
 & $s=-1$ & $41.09 (4.96)$ & $46.91 (4.21)$ & $46.18 (3.17)$ & $46.18 (5.47)$ & $42.55 (4.54)$ & $41.45 (5.32)$ & $45.45 (6.08)$ & $47.27 (3.81)$ & $46.55 (6.26)$ & $46.55 (4.96)$ & $46.91 (5.19)$ & $47.64 (5.68)$ & $42.91 (6.57)$ & $41.09 (4.24)$ & $46.55 (4.24)$ & $41.82 (8.13)$ & $43.64 (12.39)$ & $42.18 (3.71)$ & $\mathbf{49.82 (3.37)}$ & $39.64 (5.56)$ \\
 & $s=-0.5$ & $44.36 (11.42)$ & $43.27 (5.91)$ & $48.00 (5.47)$ & $42.55 (6.76)$ & $43.27 (5.44)$ & $45.82 (6.44)$ & $\mathbf{49.09 (2.57)}$ & $45.45 (1.63)$ & $38.91 (5.70)$ & $42.55 (7.33)$ & $45.45 (4.15)$ & $37.45 (2.47)$ & $42.91 (2.95)$ & $43.64 (4.15)$ & $48.00 (7.05)$ & $47.64 (2.67)$ & $44.73 (4.24)$ & $41.45 (1.78)$ & $48.00 (4.54)$ & $45.09 (2.67)$ \\
 & $s=0$ & $46.18 (5.34)$ & $45.82 (8.08)$ & $42.91 (6.46)$ & $45.09 (5.80)$ & $46.55 (7.33)$ & $46.91 (4.51)$ & $41.45 (7.92)$ & $45.09 (4.21)$ & $44.73 (8.42)$ & $45.45 (4.15)$ & $42.55 (4.96)$ & $42.91 (4.96)$ & $45.45 (5.75)$ & $40.36 (5.68)$ & $\mathbf{50.55 (2.67)}$ & $42.55 (5.22)$ & $46.18 (5.22)$ & $45.45 (5.01)$ & $45.45 (7.18)$ & $44.36 (4.69)$ \\
 & $s=0.5$ & $45.09 (3.88)$ & $44.73 (4.08)$ & $46.18 (4.96)$ & $44.36 (6.86)$ & $43.64 (5.01)$ & $44.73 (2.72)$ & $48.73 (3.13)$ & $46.55 (4.24)$ & $\mathbf{48.00 (5.22)}$ & $41.09 (2.47)$ & $41.45 (3.53)$ & $46.18 (5.82)$ & $48.73 (5.68)$ & $45.82 (8.40)$ & $45.09 (7.31)$ & $44.00 (3.88)$ & $44.36 (5.82)$ & $47.64 (7.92)$ & $44.36 (4.08)$ & $45.45 (6.40)$ \\
 & $s=1$ & $47.27 (3.04)$ & $\mathbf{50.18 (4.39)}$ & $47.27 (5.14)$ & $41.45 (6.02)$ & $45.82 (3.71)$ & $44.36 (5.82)$ & $37.45 (14.06)$ & $45.09 (8.86)$ & $46.18 (6.26)$ & $43.64 (7.09)$ & $45.45 (5.51)$ & $43.27 (2.12)$ & $42.18 (6.44)$ & $45.09 (8.86)$ & $39.64 (4.80)$ & $41.09 (5.59)$ & $43.27 (5.56)$ & $42.55 (4.24)$ & $44.36 (7.24)$ & $43.64 (3.81)$ \\
\midrule
MLP & None & \multicolumn{4}{c|}{$69.45 (5.32)$} & \multicolumn{4}{c|}{$63.64 (10.73)$} & \multicolumn{4}{c|}{$53.45 (10.13)$} & \multicolumn{4}{c|}{$40.36 (9.99)$} & \multicolumn{4}{c}{$18.91 (18.31)$} \\
 & Adjacency & \multicolumn{4}{c|}{$73.09 (6.74)$} & \multicolumn{4}{c|}{$68.00 (5.70)$} & \multicolumn{4}{c|}{$57.09 (7.51)$} & \multicolumn{4}{c|}{$38.91 (5.22)$} & \multicolumn{4}{c}{$20.36 (7.75)$} \\
 & $s=-1$ & $74.18 (5.80)$ & $63.64 (11.78)$ & $71.27 (10.69)$ & $70.55 (10.37)$ & $65.82 (12.67)$ & $60.36 (7.40)$ & $65.82 (10.69)$ & $68.00 (11.07)$ & $\mathbf{68.73 (7.03)}$ & $60.00 (6.30)$ & $59.27 (4.39)$ & $57.82 (7.83)$ & $\mathbf{50.55 (8.40)}$ & $43.27 (7.75)$ & $42.91 (10.51)$ & $33.45 (10.51)$ & $22.91 (11.18)$ & $26.18 (9.45)$ & $21.09 (4.69)$ & $23.27 (11.29)$ \\
 & $s=-0.5$ & $67.27 (4.74)$ & $71.64 (7.14)$ & $63.64 (8.83)$ & $69.09 (10.60)$ & $62.18 (12.14)$ & $\mathbf{75.64 (3.37)}$ & $68.73 (4.21)$ & $66.91 (5.06)$ & $66.18 (10.06)$ & $54.18 (5.80)$ & $63.27 (6.65)$ & $59.27 (10.26)$ & $45.82 (10.81)$ & $39.27 (10.26)$ & $42.18 (7.03)$ & $41.82 (9.05)$ & $25.09 (5.06)$ & $23.64 (7.09)$ & $15.27 (1.85)$ & $24.36 (12.26)$ \\
 & $s=0$ & $73.09 (5.80)$ & $73.45 (10.32)$ & $75.64 (10.00)$ & $71.27 (7.92)$ & $70.18 (5.47)$ & $56.00 (7.40)$ & $60.36 (7.31)$ & $67.27 (8.53)$ & $53.82 (7.77)$ & $60.73 (12.26)$ & $58.18 (12.22)$ & $62.91 (11.07)$ & $38.91 (11.07)$ & $48.73 (6.44)$ & $43.64 (5.01)$ & $43.27 (6.84)$ & $18.18 (10.48)$ & $19.64 (3.71)$ & $18.55 (2.12)$ & $18.18 (6.99)$ \\
 & $s=0.5$ & $75.27 (5.93)$ & $74.55 (5.75)$ & $\mathbf{80.00 (5.39)}$ & $77.82 (6.34)$ & $58.91 (6.86)$ & $63.64 (4.88)$ & $64.73 (6.96)$ & $68.00 (9.24)$ & $63.64 (2.82)$ & $53.45 (14.34)$ & $60.36 (12.98)$ & $56.00 (11.46)$ & $36.73 (12.03)$ & $33.45 (6.26)$ & $41.45 (8.86)$ & $43.64 (10.79)$ & $16.36 (8.91)$ & $20.73 (11.13)$ & $18.91 (8.26)$ & $25.45 (4.60)$ \\
 & $s=1$ & $72.36 (5.06)$ & $76.36 (5.27)$ & $76.00 (8.79)$ & $77.45 (5.93)$ & $67.64 (8.86)$ & $61.82 (6.50)$ & $68.00 (7.05)$ & $69.82 (3.92)$ & $66.18 (8.10)$ & $58.91 (4.96)$ & $56.73 (8.08)$ & $61.09 (9.17)$ & $38.55 (10.99)$ & $38.18 (9.89)$ & $36.73 (7.58)$ & $40.00 (14.13)$ & $26.55 (6.26)$ & $\mathbf{29.09 (10.97)}$ & $26.55 (9.93)$ & $21.82 (9.27)$ \\
\midrule
GIN & None & \multicolumn{4}{c|}{$55.64 (4.39)$} & \multicolumn{4}{c|}{$54.91 (5.44)$} & \multicolumn{4}{c|}{$49.82 (6.96)$} & \multicolumn{4}{c|}{$53.45 (7.14)$} & \multicolumn{4}{c}{$56.36 (2.30)$} \\
 & Adjacency & \multicolumn{4}{c|}{$50.18 (8.73)$} & \multicolumn{4}{c|}{$55.27 (6.76)$} & \multicolumn{4}{c|}{$53.82 (7.85)$} & \multicolumn{4}{c|}{$54.18 (4.21)$} & \multicolumn{4}{c}{$55.27 (6.57)$} \\
 & $s=-1$ & $55.27 (9.10)$ & $54.91 (4.93)$ & $51.64 (6.15)$ & $51.64 (5.22)$ & $57.45 (2.72)$ & $57.82 (5.56)$ & $55.64 (2.95)$ & $49.45 (3.53)$ & $53.82 (3.74)$ & $50.18 (3.74)$ & $54.18 (4.51)$ & $\mathbf{60.73 (7.33)}$ & $\mathbf{57.45 (7.14)}$ & $56.73 (4.05)$ & $47.27 (3.25)$ & $48.36 (3.74)$ & $58.18 (6.80)$ & $57.45 (4.54)$ & $56.73 (4.36)$ & $50.91 (3.64)$ \\
 & $s=-0.5$ & $54.18 (5.80)$ & $52.73 (6.08)$ & $50.18 (7.85)$ & $54.18 (3.88)$ & $57.09 (4.39)$ & $51.64 (8.88)$ & $54.55 (7.27)$ & $54.18 (6.24)$ & $52.36 (5.32)$ & $55.27 (5.09)$ & $49.09 (6.99)$ & $44.36 (3.74)$ & $52.73 (5.75)$ & $47.64 (3.53)$ & $52.00 (3.74)$ & $46.55 (5.22)$ & $58.91 (5.22)$ & $55.27 (8.65)$ & $53.45 (3.17)$ & $58.18 (4.45)$ \\
 & $s=0$ & $51.64 (5.22)$ & $52.73 (7.36)$ & $53.09 (8.00)$ & $54.55 (8.53)$ & $\mathbf{59.64 (4.80)}$ & $56.36 (3.25)$ & $49.45 (6.44)$ & $52.00 (5.47)$ & $53.45 (6.26)$ & $57.82 (3.71)$ & $51.27 (4.05)$ & $58.18 (5.86)$ & $50.55 (4.93)$ & $53.82 (6.15)$ & $54.18 (4.80)$ & $55.64 (6.26)$ & $53.09 (5.44)$ & $47.27 (3.64)$ & $\mathbf{59.64 (4.51)}$ & $55.27 (6.26)$ \\
 & $s=0.5$ & $57.09 (7.05)$ & $50.91 (3.45)$ & $50.18 (4.08)$ & $53.82 (5.34)$ & $54.18 (4.21)$ & $53.45 (2.18)$ & $54.55 (6.50)$ & $54.91 (7.40)$ & $50.18 (2.95)$ & $56.36 (5.27)$ & $51.27 (3.88)$ & $54.91 (4.05)$ & $47.64 (9.72)$ & $50.91 (3.04)$ & $52.73 (6.40)$ & $56.00 (4.21)$ & $56.36 (7.36)$ & $57.82 (8.94)$ & $57.45 (4.39)$ & $58.18 (4.60)$ \\
 & $s=1$ & $\mathbf{57.45 (2.47)}$ & $55.27 (4.08)$ & $50.18 (3.37)$ & $53.09 (7.66)$ & $52.36 (3.88)$ & $53.09 (6.34)$ & $52.36 (4.21)$ & $52.36 (3.71)$ & $55.64 (7.14)$ & $53.09 (5.56)$ & $46.55 (5.59)$ & $55.27 (5.22)$ & $53.09 (4.93)$ & $53.09 (4.51)$ & $53.82 (5.34)$ & $56.00 (5.56)$ & $57.09 (5.09)$ & $56.36 (4.74)$ & $54.18 (3.53)$ & $54.91 (4.51)$ \\
\midrule
GraphSAGE & None & \multicolumn{4}{c|}{$59.64 (6.74)$} & \multicolumn{4}{c|}{$64.73 (7.24)$} & \multicolumn{4}{c|}{$59.27 (6.57)$} & \multicolumn{4}{c|}{$62.18 (7.31)$} & \multicolumn{4}{c}{$62.91 (3.37)$} \\
 & Adjacency & \multicolumn{4}{c|}{$67.27 (4.88)$} & \multicolumn{4}{c|}{$61.09 (4.54)$} & \multicolumn{4}{c|}{$59.64 (8.79)$} & \multicolumn{4}{c|}{$60.73 (5.93)$} & \multicolumn{4}{c}{$55.27 (8.26)$} \\
 & $s=-1$ & $64.36 (6.15)$ & $61.45 (12.88)$ & $61.09 (3.74)$ & $64.36 (7.77)$ & $65.45 (4.88)$ & $\mathbf{65.82 (6.02)}$ & $58.55 (5.44)$ & $62.55 (9.10)$ & $56.73 (7.66)$ & $61.45 (2.41)$ & $61.82 (5.63)$ & $64.73 (4.69)$ & $57.45 (4.24)$ & $59.27 (5.70)$ & $55.27 (8.73)$ & $62.55 (6.76)$ & $61.09 (6.15)$ & $54.18 (9.01)$ & $64.73 (5.82)$ & $61.09 (4.39)$ \\
 & $s=-0.5$ & $65.82 (3.13)$ & $64.00 (3.13)$ & $60.36 (3.53)$ & $61.45 (5.56)$ & $60.36 (3.13)$ & $62.91 (5.34)$ & $64.73 (9.66)$ & $60.36 (2.41)$ & $61.45 (8.08)$ & $56.73 (5.56)$ & $57.09 (8.02)$ & $57.45 (3.37)$ & $58.55 (5.80)$ & $65.09 (5.80)$ & $63.27 (6.84)$ & $61.45 (5.44)$ & $60.73 (3.56)$ & $60.36 (4.66)$ & $\mathbf{67.27 (7.36)}$ & $58.55 (6.44)$ \\
 & $s=0$ & $\mathbf{68.73 (5.32)}$ & $62.18 (4.05)$ & $59.64 (3.71)$ & $59.64 (9.58)$ & $62.91 (6.26)$ & $64.00 (7.75)$ & $56.73 (9.58)$ & $60.00 (3.04)$ & $\mathbf{65.82 (8.16)}$ & $58.55 (5.80)$ & $57.82 (5.56)$ & $64.36 (2.72)$ & $58.55 (2.67)$ & $64.73 (5.47)$ & $54.18 (9.99)$ & $60.73 (9.38)$ & $64.00 (7.40)$ & $53.45 (8.02)$ & $62.18 (8.56)$ & $52.36 (6.24)$ \\
 & $s=0.5$ & $56.00 (5.91)$ & $56.36 (11.15)$ & $60.73 (3.17)$ & $64.00 (5.91)$ & $64.36 (4.24)$ & $62.55 (4.39)$ & $62.91 (7.51)$ & $58.18 (4.74)$ & $62.55 (7.14)$ & $59.27 (3.74)$ & $58.91 (2.72)$ & $59.27 (3.17)$ & $58.91 (3.74)$ & $62.18 (5.91)$ & $60.00 (6.08)$ & $56.73 (9.99)$ & $59.27 (6.04)$ & $64.00 (5.91)$ & $64.73 (6.57)$ & $61.82 (3.98)$ \\
 & $s=1$ & $60.36 (6.74)$ & $64.00 (5.56)$ & $61.82 (7.88)$ & $53.09 (8.86)$ & $61.45 (5.32)$ & $64.36 (8.34)$ & $64.36 (5.22)$ & $62.55 (8.26)$ & $59.27 (7.85)$ & $64.00 (6.34)$ & $62.91 (11.13)$ & $54.91 (6.02)$ & $60.36 (6.74)$ & $\mathbf{65.09 (6.02)}$ & $62.55 (3.37)$ & $62.18 (4.36)$ & $60.36 (6.44)$ & $58.55 (7.22)$ & $61.09 (5.93)$ & $59.27 (4.39)$ \\
\bottomrule
\end{tabular}}
\end{table}

\end{document}